%% file: templateArxiv.tex
\documentclass{article}

\usepackage{PRIMEarxiv}

\usepackage[utf8]{inputenc} 
\usepackage[T1]{fontenc}    
\usepackage{hyperref}       

\usepackage{todonotes}
\usepackage{subcaption}
\usepackage{natbib}
\usepackage{url}            
\usepackage{booktabs}       
\usepackage{amsfonts} 
\usepackage{algorithm} 
\usepackage{algorithmic}
\usepackage{amsthm}
\usepackage{amssymb}
\usepackage{amsmath}
\usepackage{bbm}
\usepackage{mathtools}
\usepackage{nicefrac}       
\usepackage{microtype}      
\usepackage{lipsum}
\usepackage{fancyhdr}       
\usepackage{graphicx}       
\usepackage[capitalize,noabbrev]{cleveref}
\graphicspath{{media/}}     

\newtheorem{assumption}{Assumption}

\theoremstyle{plain}
\newtheorem{theorem}{Theorem}[section]

\newtheorem{lemma}[theorem]{Lemma}
\newtheorem{corollary}[theorem]{Corollary}
\newtheorem{example}[theorem]{Example}
\theoremstyle{definition}
\newtheorem{definition}[theorem]{Definition}
\theoremstyle{remark}
\newtheorem{remark}[theorem]{Remark}

\newcommand{\R}{\mathbb{R}}
\newcommand{\E}{\mathbb{E}}

\newcommand{\iid}{\overset{\mathrm{iid}}{\sim}}
\newcommand{\prox}{\operatorname{prox}}
\newcommand{\dist}{\operatorname{dist}}

\newcommand{\proj}{\mathrm{proj}}

\title{Oracle-Robust Online Alignment for Large Language Models
}

\author{
  Zimeng Li , Mudit Gaur , Vaneet Aggarwal 
  \\
  Purdue University \\
}

\begin{document}
\maketitle

\begin{abstract}
We study online alignment of large language models under misspecified preference feedback, where the observed preference oracle deviates from an ideal but unknown ground-truth oracle. The online LLM alignment problem is a bi-level reinforcement problem due to the coupling between data collection and policy updates. Recently, the problem has been reduced to tractable single-level objective in the SAIL (Self-Improving Efficient Online Alignment)  framework. In this paper, we introduce a pointwise oracle uncertainty set in this problem and formulate an oracle-robust online alignment objective as a worst-case optimization problem.
For log-linear policies, we show that this robust objective admits an exact closed-form decomposition into the original loss function plus an explicit sensitivity penalty.
We develop projected stochastic composite updates for the resulting weakly convex objective and prove $\widetilde{O}(\varepsilon^{-2})$ oracle complexity for reaching approximate stationarity.
\end{abstract}

\keywords{LLM alignment \and preference learning \and oracle robustness \and weakly convex optimization}

\section{Introduction}
Large language models (LLMs) are increasingly deployed as interactive systems, where failures in instruction following or safety can have immediate impact. A common alignment pipeline is reinforcement learning from human feedback (RLHF), which updates a policy $\pi_\theta$ using pairwise preference feedback on sampled responses \cite{christiano2017deep,ouyang2022training,ziegler2019fine,stiennon2020learning,bai2022training}. In practice, the feedback is produced by a preference oracle $P$, which may be a pool of human annotators or a learned reward/preference model \cite{ouyang2022training}. Such oracles can deviate from an idealized true oracle $P^\star$ in structured ways, for example due to population heterogeneity across users or temporal drift in labeling standards, often mediated by style/value confounders (e.g., preferring verbosity over correctness), thereby inducing preference shift \cite{poddar2024personalizing,ramesh2024grpo}. In offline alignment with a fixed preference dataset, this manifests as a train/test preference mismatch between the dataset and the deployed user population \cite{xu2025robustdpo}. In online or on-policy alignment, the policy $\pi_\theta$ controls the distribution of queried comparisons, so systematic oracle deviations can be amplified by the feedback loop and lead to over-optimization of oracle quirks rather than the intended preferences \cite{skalse2022rewardhacking,karwowski2023goodhart,eisenstein2023helping,gao2023scaling}. Therefore, robustness should be modeled explicitly through an uncertainty set and a worst-case objective, rather than via simplistic i.i.d.\ label-noise assumptions.

Existing work on LLM alignment robustness largely falls into two categories. On the one hand, online RLHF methods explicitly account for the coupling between data collection and policy updates, using  bilevel formulations that reduce to tractable single-level objectives  \cite{ding2024sail,bai2025online,zhang2024selfexploring}. However, these approaches typically assume the oracle matches the modeling assumptions and do not provide distributionally robust guarantees under oracle misspecification. On the other hand, recent offline methods robustify direct preference optimization (DPO) on a fixed dataset by solving a minimax problem over a distributional uncertainty set, improving performance under preference shift \cite{rafailov2023dpo,xu2025robustdpo,wu2025drdpo}. These robust DPO formulations are tailored to static datasets and do not address the on-policy setting where the comparison distribution changes with $\pi_\theta$, and where oracle perturbations can interact with the evolving sampling distribution. This gap motivates a robustness model that is compatible with online preference collection and still yields a tractable objective and optimization theory.

The authors of \cite{ding2024sail} proposed an approach, Self-Improving Efficient Online Alignment (SAIL), that reduces the bi-level reinforcement learning problem of LLM alignment \cite{chakraborty2023parl,gaur2025sample}
to an efficient single-level first-order method using the reward-policy equivalence approach. In this paper, we aim to consider this problem in the presence of oracle misspecification. We introduce a pointwise uncertainty set $\mathcal{U}^{W}(P^\star,\rho)$ that bounds the deviation of the preference probability $P(1\mid z)$ from $P^\star(1\mid z)$ for every comparison $z=(x,y_1,y_2)$ that may be generated under the policy-induced sampling distribution $d_\theta$. We then define an oracle-robust objective $L^{W}_{\rho}(\theta)$ as the worst-case  negative log-likelihood over $P\in\mathcal{U}^{W}(P^\star,\rho)$, which guards against worst-case exploitation of structured oracle deviations in the on-policy feedback loop. Although robust bilevel formulations are generally difficult, under the log-linear  preference model we show that $L^{W}_{\rho}(\theta)$ admits an exact closed-form decomposition into the nominal  loss $L_{\mathrm{SAIL}}(\theta)$ plus an explicit robustness penalty $\lambda R(\theta)$, where $R(\theta)=\mathbb{E}_{z\sim d_\theta}\!\big[\lvert R(\theta;x,y_1,y_2)\rvert\big]$ is the expected absolute pairwise score and $\lambda=\rho\beta$. 
Since the constrained objective can be non-smooth, we measure first-order stationarity via the Moreau-envelope $F_{\lambda_{env}}$. By standard envelope/proximal properties, an $\epsilon$-stationary point of $F_{\lambda_{env}}$ implies that the associated proximal point is $\epsilon$-nearly stationary for the original constrained problem (and the iterate lies within $\mathcal{O}(\epsilon \lambda_{env})$ of the proximal point). Using this stationarity surrogate, we obtain an $\tilde{\mathcal{O}}(\varepsilon^{-2})$ oracle complexity bound for reaching an $\varepsilon$-stationary point of the envelope. Finally, because $\rho = 0$ recovers the nominal SAIL objective, our analysis also yields a convergence-to-stationarity guarantee for optimizing $L_{\mathrm{SAIL}}(\theta)$ as a special case. 
We summarize our main contributions as follows.
\begin{itemize}
    \item  We formulate oracle-robust online alignment by combining SAIL with a pointwise oracle uncertainty set $\mathcal{U}^{W}(P^\star,\rho)$, leading to the robust objective $L^{W}_{\rho}(\theta)$ that optimizes against worst-case preference perturbations under policy-induced sampling.
    \item We give an exact closed-form characterization of $L^{W}_{\rho}(\theta)$ as $L_{\mathrm{SAIL}}(\theta)+\lambda R(\theta)$ and interpret $R(\theta)$ as an explicit sensitivity penalty given by an expected absolute pairwise score term.
    \item We show the resulting constrained robust objective is weakly convex and analyze projected stochastic composite updates using the Moreau-envelope as a smooth stationarity surrogate. We prove an $\tilde{\mathcal{O}}(\varepsilon^{-2})$ oracle complexity bound for reaching an $\epsilon$-stationary point of the envelope. 
    As a corollary, setting $\rho  = 0$ yields a convergence-to-stationarity guarantee for the original SAIL objective.
\end{itemize}

\section{Related Work}
\label{sec:related}
\paragraph{Bilevel reinforcement learning and RLHF.}
Bilevel optimization provides a principled abstraction for hierarchical learning problems such as hyperparameter optimization and meta-learning, where an upper-level objective depends on the solution of a lower-level training problem \citep{domke2012generic,maclaurin2015gradient,franceschi2018bilevel}.
This perspective is increasingly relevant to alignment: RLHF couples (i) learning preferences/rewards and (ii) policy optimization (often implemented with PPO/TRPO-style updates), and the alignment objective is evaluated on data whose distribution is induced by the policy produced by the lower-level optimization \citep{christiano2017deep,ouyang2022training,schulman2017proximal,schulman2015trust}.
PARL (Policy Alignment in
Reinforcement Learning) formalizes policy alignment in RL as a stochastic bilevel program that explicitly accounts for decision-dependent data collection at the upper level, and develops an algorithm with finite-sample guarantees \citep{chakraborty2023parl}.
Focusing on online LLM alignment, SAIL similarly argues that the alignment process is underpinned by bilevel optimization, and derives an efficient single-level first-order surrogate via reward--policy equivalence, enabling iterative on-policy data generation and self-improving alignment \citep{ding2024sail}.
Complementing these algorithmic frameworks, recent theory studies the statistical and computational limits of bilevel RL in nonconvex settings; for example, \cite{gaur2025sample} establish sample complexity bounds for bilevel reinforcement learning with parameterized settings.
Related developments on general nonconvex bilevel optimization further analyze and mitigate the cost of hypergradient computation through penalty-based approaches \citep{shen2023penalty,shen2025principled}.

\paragraph{Robust alignment under distribution shift.}
A central challenge in alignment is robustness: preference data are typically collected from a narrow, static source distribution, while deployment-time preferences can vary across populations and drift over time, causing brittleness for offline objectives such as DPO \citep{rafailov2023dpo,son2024right}.
A recent line of work imports distributionally robust optimization (DRO)\cite{rahimian2019distributionally} to explicitly hedge against such preference shifts.
In particular, \cite{xu2025robustdpo} propose distributionally robust DPO with Wasserstein and KL uncertainty sets (WDPO/KLDPO), provide sample-complexity characterizations, and develop scalable gradient-based algorithms suitable for large-scale LLM fine-tuning.
Concurrently, robust variants of direct preference learning consider alternative uncertainty sets and regularizers; e.g., \cite{wu2025drdpo} distributionally robustify DPO and empirically study robustness to preference/data perturbations, and \cite{ramesh2024grpo} optimize for worst-case group mixtures to handle heterogeneous preferences.
These methods connect robust alignment to foundational DRO results and tractable reformulations \citep{mohajerin2018data, kuhn2019wasserstein,duchi2021learning}, as well as classical robust RL/MDP formulations that optimize against worst-case transition models \citep{iyengar2005robust}.

\section{Problem Setup}
\label{sec:problem_setup}

Let $X$ be the prompt (context) space and $Y$ the response space. For each parameter vector $\theta \in \Theta \subseteq \R^d$, the language model induces a conditional distribution (policy) $\pi_\theta(\cdot \mid x)\in \Delta(Y)$ for each $x\in X$, where $\Delta(Y)$ denotes the probability simplex over $Y$. We view each $y\in Y$ as a finite token sequence and assume $\pi_\theta$ is generated autoregressively, i.e.,
\(
\pi_\theta(y \mid x)=\prod_{t=1}^{|y|}\pi_\theta(y_t \mid x, y_{<t}),
\)
with the usual convention that generation terminates at an end-of-sequence event.

We model preferences over response pairs by a (possibly unknown) preference oracle
\(
\mathcal{O}
:=\Bigl\{P
\Bigm|\ P(y_1\succ y_2\mid x)=1-P(y_2\succ y_1\mid x)\Bigr\}.
\)
For $z=(x,y_1,y_2)\in Z:=X\times Y\times Y$, we write $P(1\mid z):=P(y_1\succ y_2\mid x)$ and interpret the (stochastic) preference label $\mathbb{I}\in\{0,1\}$ as
\(
y \mid z \sim \mathrm{Bernoulli}\!\bigl(P(1\mid z)\bigr),
 \mathbb{I}=1 \Leftrightarrow y_1 \succ y_2.
\)
The sampling procedure is policy-dependent: draw a prompt $x\sim \mu$ from a fixed distribution $\mu$ over $X$, then sample two responses independently from $\pi_\theta(\cdot\mid x)$,
\(
x\sim \mu,\;y_1,y_2 \iid \pi_\theta(\cdot\mid x).
\)
This induces the data-generating distribution on $Z$,
\begin{equation}
d_\theta(x,y_1,y_2):=\mu(x)\,\pi_\theta(y_1\mid x)\,\pi_\theta(y_2\mid x).
\label{eq:data_gen}
\end{equation}

Let $P^\star\in\mathcal{O}$ denote the (unknown) true oracle. Fix $\rho>0$. For each $z=(x,y_1,y_2)\in Z$, define the pointwise uncertainty set
\begin{equation}
U^W_z(P^\star,\rho)
:=\Bigl\{P\in\mathcal{O}\ \Bigm|\ \bigl|P(1\mid z)-P^\star(1\mid z)\bigr|\le \rho\Bigr\}.
\label{eq:pointwise_U}
\end{equation}
Although $U^W_z(P^\star,\rho)$ is specified as a pointwise (uniform) neighborhood in the scalar preference probability, it admits a Wasserstein interpretation at fixed $z$\cite{peyre2019computational}: viewing $P(\cdot\mid z)$ and $P^\star(\cdot\mid z)$ as Bernoulli distributions on $\{0,1\}$ with ground cost $c(y,y')=|y-y'|$, the $1$-Wasserstein distance satisfies
\[
W_1\!\bigl(\mathrm{Ber}(P(1\mid z)),\mathrm{Ber}(P^\star(1\mid z))\bigr)=|P(1\mid z)-P^\star(1\mid z)|.
\]
Hence, under binary support, the Wasserstein ball constraint $W_1(\cdot,\cdot)\le \rho$ is equivalent to the interval constraint in~\eqref{eq:pointwise_U}.

The corresponding global uncertainty set requires the pointwise constraint to hold uniformly over all comparison triples:
\begin{equation}
\begin{aligned}
    U^W(P^\star,\rho)&:=\bigcap_{z\in Z}U^W_z(P^\star,\rho) \\
&=\Bigl\{P\in\mathcal{O}\ \Bigm|\ \sup_{z\in Z}\bigl|P(\cdot\mid z)-P^\star(\cdot\mid z)\bigr|\le \rho\Bigr\}.
\end{aligned}
\label{eq:global_U}
\end{equation}
$U^W(P^\star,\rho)$ models an adversarial but uniformly bounded misspecification of the (conditional) preference probability across all prompts and response pairs that may be encountered under the policy-induced sampling in~\eqref{eq:data_gen}.

Fix a prompt (context) $x\in X$. The policy $\pi_\theta(\cdot\mid x)$ independently generates a pair of responses
$y_1,y_2\sim \pi_\theta(\cdot\mid x)$, which are then compared by a (possibly noisy) preference oracle.
Although the latent reward function $r^\star(x,y)$ underlying preferences is not directly observed, we assume the
oracle satisfies the Bradley--Terry (BT) model\cite{bradley1952rank}: for all $x\in X$ and $y_1,y_2\in Y$,
\begin{equation}
\label{eq:bt_model}
P^\star(y_1 \succ y_2 \mid x)
=
\sigma\!\bigl(r^\star(x,y_1)-r^\star(x,y_2)\bigr),
\end{equation}
where $\sigma(u):=(1+e^{-u})^{-1}$ denotes the logistic sigmoid.

We impose a mild margin condition to ensure that all admissible oracles remain valid. We state this non-degeneracy requirement in the following assumption.

\begin{assumption}[Nondegenerate true oracle and admissible radius]\label{ass:nondegenerate_oracle}
There exists a constant $\delta\in(0,1/2]$ such that for all $(x,y_1,y_2)\in X\times Y\times Y$,
\begin{equation}
\label{eq:oracle_nondegenerate}
\delta \ \le\ P^\star(y_1 \succ y_2 \mid x)\ \le\ 1-\delta .
\end{equation}
We restrict the oracle uncertainty radius to $\rho\in(0,\delta)$, so that every preference oracle $P$ is nondegenerate, i.e.  $\forall (x,y_1,y_2),\ \forall P\in U^W(P^\star,\rho)$
\[
0\le\delta-\rho \ \le\ P(y_1 \succ y_2 \mid x)\ \le\ 1-(\delta-\rho)\le 1
.
\]
\end{assumption}
\begin{remark}
\label{rem:nondegenerate_interp}
Assumption~\ref{ass:nondegenerate_oracle} rules out \emph{nearly deterministic} preferences: for every comparison $(x,y_1,y_2)$, the oracle assigns nontrivial probability to either outcome.
When the oracle obeys the Bradley--Terry form~\eqref{eq:bt_model}, this condition is equivalent to a uniform bound on the reward gap.
Indeed, writing $\Delta r^\star:=r^\star(x,y_1)-r^\star(x,y_2)$, the monotonicity of $\sigma$ gives
\(
\log\!\frac{\delta}{1-\delta}\ \le\ \Delta r^\star\ \le\ \log\!\frac{1-\delta}{\delta},
\)
and therefore
\begin{equation}
\label{eq:reward_gap_bound}
\bigl|r^\star(x,y_1)-r^\star(x,y_2)\bigr|
\le
\log\!\frac{1-\delta}{\delta}.
\end{equation}
Since $y_1,y_2$ are drawn i.i.d.\ from $\pi_\theta(\cdot\mid x)$, \eqref{eq:reward_gap_bound} can be read as a structural condition on the comparisons induced by the policy: the sampling procedure does not generate response pairs whose latent rewards differ so drastically that the preference label becomes essentially deterministic.
\end{remark}

\paragraph{Background (SAIL).}
SAIL is a preference-based RLHF framework for online alignment that explicitly models the coupling between (i) learning from pairwise preference feedback and (ii) updating the policy that generates the responses being compared. In the online regime, the preference data distribution is policy-dependent, and SAIL represents this dependence via a bilevel program: a reward model $r$ is fit from Bradley--Terry comparisons on responses sampled from the KL-regularized optimal policy induced by $r$,
\begin{equation}
\label{eq:sail_bilevel}
\begin{aligned}
\text{(upper)}&\quad \min_{r}-\E_{x\sim\mu,\,y_i\sim\pi(\cdot\mid x),(y_w\succ y_\ell)\sim P^\star}\Bigl[\log \sigma\bigl(r(x,y_w)-r(x,y_\ell)\bigr)\Bigr] \\
\text{(lower)}&\quad \text{s.t.}\ \pi_r^\star \in \arg\max_{\pi}
\E_{x\sim\mu,\,y\sim\pi(\cdot\mid x)}
\Bigl[r(x,y)-\beta D_{\mathrm{KL}}\!\bigl(\pi(\cdot\mid x)\,\|\,\pi_{\mathrm{SFT}}(\cdot\mid x)\bigr)\Bigr],
\end{aligned}
\end{equation}

Direct differentiation through the inner solution mapping $r\mapsto \pi_r^\star$ requires hypergradient computations. SAIL circumvents this by exploiting the reward--policy equivalence for KL-regularized RL: any optimizer $\pi_r^\star$ satisfies
\begin{equation}
\label{eq:sail_reward_policy_equiv}
r(x,y)=\beta \log \frac{\pi_r^\star(y\mid x)}{\pi_{\mathrm{SFT}}(y\mid x)}+\beta\log Z_r(x),
\end{equation}
for a normalization $Z_r(x)$ independent of $y$. Substituting~\eqref{eq:sail_reward_policy_equiv} into the upper-level BT likelihood reduces the bilevel program to a tractable single-level policy objective; parametrizing $\pi_\theta$ yields
\begin{equation}
 \label{eq:Lsail_single_level}
\begin{aligned}
    &L^{\mathrm{SAIL}}(\theta)
:= -\E_{x\sim\mu,\;y_1,y_2\iid\pi_\theta(\cdot\mid x),\;(y_w,y_\ell)\sim P^\star}
\!\left[\log \sigma\!\left(
\beta \log \frac{\pi_\theta(y_w\mid x)}{\pi_{\mathrm{SFT}}(y_w\mid x)}
-\beta \log \frac{\pi_\theta(y_\ell\mid x)}{\pi_{\mathrm{SFT}}(y_\ell\mid x)}
\right)\right].
\end{aligned}
\end{equation}
The single-level SAIL objective $L^{\mathrm{SAIL}}(\theta)$ in Eq. \eqref{eq:Lsail_single_level} evaluates a policy $\pi_\theta$ under preference feedback generated by the true oracle $P^\star$. In practice, however, we do not have access to $P^\star$, and the observed preference labels may be produced by a perturbed oracle whose conditional preference probabilities deviate from those of $P^\star$. We model this misspecification by allowing the data-generating oracle $P$ to range over the global uncertainty set $U^W(P^\star,\rho)$ in Eq. \eqref{eq:global_U}, which enforces a uniform pointwise deviation bound across all prompts and response pairs that may be encountered under policy-induced sampling. Concretely, given $\theta$, we draw $x\sim\mu$ and $y_1,y_2\stackrel{\text{i.i.d.}}{\sim}\pi_\theta(\cdot\mid x)$, and the oracle $P$ induces an ordered pair $(y_w,y_\ell)$ corresponding to the preferred and less preferred response. We then define the oracle-robust alignment objective as the worst-case SAIL value over $U^W(P^\star,\rho)$:
\begin{equation}
\label{eq:Lrobust_def}
\begin{aligned}
L^W_{\rho}(\theta)
:= \sup_{P \in U^W(P^\star,\rho)}
-\mathbb{E}_{x\sim\mu,\;y_1,y_2\stackrel{\text{i.i.d.}}{\sim}\pi_\theta(\cdot\mid x),\;(y_w,y_\ell)\sim P} 
\Biggl[\log \sigma\!\Biggl(
\beta \log \frac{\pi_\theta(y_w\mid x)}{\pi_{\mathrm{SFT}}(y_w\mid x)}
-\beta \log \frac{\pi_\theta(y_\ell\mid x)}{\pi_{\mathrm{SFT}}(y_\ell\mid x)}
\Biggr)\Biggr].
\end{aligned}
\end{equation}

\section{Proposed Approach}
\label{sec:main}
In this section, we first relate our oracle-robust objective $L^W_{\rho}$ to the nominal SAIL objective
$L^{\mathrm{SAIL}}$ by showing that $L^W_{\rho}$ admits an \emph{exact} decomposition into the SAIL loss
and an explicit regularization term. We then study the regularity of the two components separately:
we impose a smoothness condition on $L^{\mathrm{SAIL}}$ and analyze the policy-dependent penalty $R(\theta)$.
Under mild assumptions, these results together imply that $L^W_{\rho}$ is weakly convex.

\subsection{Decomposition of $L^{W}_\rho(\theta)$}

Our robust objective $L^{W}_\rho(\theta)$ is defined via a worst-case expectation over the global oracle uncertainty set $U^{W}(P^\star,\rho)$. The following assumption specifies the log-linear SAIL comparison model and enables an exact reduction of the inner worst-case problem to an explicit regularizer.

\begin{assumption}[Log-linear policy class]
\label{ass:loglinear_sail}
Let $\psi:X\times Y\to\R^d$ be a known $d$-dimensional feature map 
We consider the class of log-linear (softmax) policies
\[
\Pi
:=
\left\{
\pi_\theta:\ 
\pi_\theta(y\mid x)
=
\frac{\exp\bigl(\theta^\top \psi(x,y)\bigr)}
{\sum_{y'\in Y}\exp\bigl(\theta^\top \psi(x,y')\bigr)}
\right\}.
\]
\end{assumption}

For notational convenience, let $\theta_{\mathrm{ref}}\in\Theta$ denote the fixed parameter of the reference (SFT) policy, i.e
\begin{equation}
    \label{ref_theta}
    \pi_{\theta_{\mathrm{ref}}} \;=\; \pi_{\mathrm{SFT}} .
\end{equation}

\begin{theorem}[Decomposition of $L^{W}_\rho(\theta)$]
\label{thm:decomposition_LW}
Recall $L^{\mathrm{SAIL}}(\theta)$ denotes the non-robust SAIL objective and $L^{W}_\rho(\theta)$ denotes the robust objective defined by the worst-case oracle in $U^{W}(P^\star,\rho)$.Define the pairwise score
\[
R(\theta; x,y_1,y_2):=(\theta-\theta_{\mathrm{ref}})^\top\!\bigl(\psi(x,y_1)-\psi(x,y_2)\bigr),
\]
and the robust penalty 
\begin{equation}
\label{robust_penalty}
R(\theta):=\E_{x\sim\mu}\E_{y_1,y_2\sim\pi_\theta(\cdot\mid x)}\left[\bigl|R(\theta;x,y_1,y_2)\bigr|\right].
\end{equation}

Then under Assumptions \ref{ass:nondegenerate_oracle},\ref{ass:loglinear_sail}, the robust objective admits the exact decomposition
\begin{equation}
\label{eq:LW_decomp}
L^{W}_\rho(\theta)=L^{\mathrm{SAIL}}(\theta)+\lambda R(\theta),
\qquad
\lambda:=\rho\beta.
\end{equation}
\end{theorem}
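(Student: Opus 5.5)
Under Assumption~\ref{ass:loglinear_sail}, $\log\pi_\theta(y\mid x)/\pi_{\mathrm{SFT}}(y\mid x)=(\theta-\theta_{\mathrm{ref}})^\top\psi(x,y)-\log Z_\theta(x)+\log Z_{\theta_{\mathrm{ref}}}(x)$, so the normalizers cancel in the pairwise difference. The plan is therefore to first rewrite the SAIL integrand in closed form and then solve the inner supremum pointwise in $z$.

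Fix $z=(x,y_1,y_2)$ and write $s:=\beta R(\theta;x,y_1,y_2)$. The implicit reward difference inside the sigmoid is $s$ if $y_w=y_1$ and $-s$ if $y_w=y_2$. For a fixed oracle $P$ the conditional loss is
\[
\ell_P(z)=-p\log\sigma(s)-(1-p)\log\sigma(-s),\qquad p:=P(1\mid z).
\]
Using $\log\sigma(s)-\log\sigma(-s)=s$, this equals $-\log\sigma(-s)-p\,s$, which is affine in $p$ with slope $-s$.

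Next, the supremum over $U^W(P^\star,\rho)$ decouples across $z$. The constraint \eqref{eq:global_U} is a product of intervals $|p(z)-p^\star(z)|\le\rho$, one for each $z$, and $d_\theta$ does not depend on $P$. Hence the supremum of $\E_{z\sim d_\theta}[\ell_P(z)]$ equals $\E_{z\sim d_\theta}[\sup_{|p-p^\star|\le\rho}\ell_p(z)]$. One direction is trivial. For the other, we pick the pointwise maximizer $p(z)=p^\star(z)-\rho\,\mathrm{sign}(s)$. This is a measurable selection and a valid element of $\mathcal{O}$: antisymmetry is preserved because swapping $y_1,y_2$ flips the sign of $s$, and Assumption~\ref{ass:nondegenerate_oracle} with $\rho<\delta$ keeps $p\in[0,1]$. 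An affine function is maximized at this endpoint, with value $-\log\sigma(-s)-p^\star s+\rho|s|$.

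Taking expectations, the first two terms give exactly $\E_{z\sim d_\theta}[\ell_{P^\star}(z)]=L^{\mathrm{SAIL}}(\theta)$. The last term gives $\rho\beta\,\E_{z\sim d_\theta}|R(\theta;z)|=\lambda R(\theta)$, which is \eqref{eq:LW_decomp}.

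The main obstacle is the interchange of supremum and expectation together with the admissibility of the worst-case oracle. We must check that the selected $P$ satisfies the complementarity constraint defining $\mathcal{O}$ and stays within $[0,1]$. We must also handle ties $s=0$, where any $p$ is optimal, so we choose $p=p^\star$ to preserve antisymmetry. Integrability should be noted: since $|\log\sigma(\pm s)|\le|s|+\log 2$, finiteness follows whenever $\E_{d_\theta}|R(\theta;z)|<\infty$, e.g.\ for bounded features.
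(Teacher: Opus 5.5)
Your proof is correct and follows the paper's route. You decouple the supremum pointwise over the interval constraints, maximize the affine function of $p(z)$ at an endpoint, use $\log\sigma(t)-\log\sigma(-t)=t$, and cancel the log-partition terms under the log-linear class; you simply do the cancellation first rather than last. Your explicit check that the worst-case oracle respects the antisymmetry constraint of $\mathcal{O}$, taking $p=p^\star$ at ties, slightly improves on the paper's selector. That selector sets $p=p^\star+\rho$ whenever $\ell^1_\theta\ge\ell^0_\theta$ and so violates antisymmetry at ties, harmlessly, since the loss does not depend on $p$ when $s=0$.
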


\begin{proof}[Proof sketch of Theorem \ref{thm:decomposition_LW}]
Fix $\theta$ and write $z=(x,y_1,y_2)\in Z$. Define the pairwise log-ratio score
\[
h_\theta(x,y_1,y_2)
:= \log \frac{\pi_\theta(y_1\mid x)}{\pi_{\mathrm{SFT}}(y_1\mid x)}
 - \log \frac{\pi_\theta(y_2\mid x)}{\pi_{\mathrm{SFT}}(y_2\mid x)}
\]
and the two per-sample losses
\[
\ell^{1}_{\theta}(z) := -\log \sigma\!\bigl(\beta\, h_\theta(x,y_1,y_2)\bigr),\]
\[
\ell^{0}_{\theta}(z) := -\log \sigma\!\bigl(\beta\, h_\theta(x,y_2,y_1)\bigr).
\]
Let $p^\star(z):=P^\star(1\mid z)$ and $p(z):=P(1\mid z)$.
Then the SAIL objective admits the representation
\[
L^{\mathrm{SAIL}}(\theta)
=
\E_{z\sim d_\theta}\Big[p^\star(z)\,\ell^1_\theta(z)+\bigl(1-p^\star(z)\bigr)\,\ell^0_\theta(z)\Big],
\]
while the robust objective is
\[
L^W_\rho(\theta)
=
\sup_{P\in U^W(P^\star,\rho)}
\E_{z\sim d_\theta}\Big[p(z)\,\ell^1_\theta(z)+\bigl(1-p(z)\bigr)\,\ell^0_\theta(z)\Big].
\]
Since $U^W(P^\star,\rho)$ enforces $|p(z)-p^\star(z)|\le\rho$ pointwise, the inner supremum is separable across $z$ and (by linearity in $p(z)$) is attained at an endpoint, yielding
\begin{equation}
    \begin{aligned}
        &\sup_{P\in U^W(P^\star,\rho)}\E_{y\sim P(\cdot\mid z)}\big[y\ell^1_\theta(z)+(1-y)\ell^0_\theta(z)\big]
=
\E_{y\sim P^\star(\cdot\mid z)}\big[y\ell^1_\theta(z)+(1-y)\ell^0_\theta(z)\big]
+\rho\,\bigl|\ell^1_\theta(z)-\ell^0_\theta(z)\bigr|.
    \end{aligned}
\end{equation}
Taking $\E_{z\sim d_\theta}$ gives
\[
L^W_\rho(\theta)
=
L^{\mathrm{SAIL}}(\theta)
+\rho\,\E_{z\sim d_\theta}\bigl|\ell^1_\theta(z)-\ell^0_\theta(z)\bigr|.
\]
Finally,
\[
\ell^1_\theta(z)-\ell^0_\theta(z)
=
\log\frac{\sigma(-\beta h_\theta(x,y_1,y_2))}{\sigma(\beta h_\theta(x,y_1,y_2))}
=
-\beta\,h_\theta(x,y_1,y_2),
\]
so the extra term equals $\rho\beta\,\E_{z\sim d_\theta}\left[|h_\theta(x,y_1,y_2)|\right]$.
Under the log-linear policy assumption (Assumption \ref{ass:loglinear_sail}), $h_\theta(x,y_1,y_2)=R(\theta;x,y_1,y_2)$ , hence
$L^W_\rho(\theta)=L^{\mathrm{SAIL}}(\theta)+\lambda R(\theta)$ with $\lambda=\rho\beta$.
\end{proof}

\begin{remark}[Interpretation of the decomposition]
Theorem~\ref{thm:decomposition_LW} separates the robust objective into a nominal fitting term $L^{\mathrm{SAIL}}(\theta)$ and an explicit robustness penalty $\lambda R(\theta)$ that depends on the policy-induced sampling distribution. The penalty $R(\theta)$ measures the expected magnitude of the pairwise score $R(\theta;x,y_1,y_2)$ over i.i.d.\ response pairs $(y_1,y_2)\sim\pi_\theta(\cdot\mid x)$, and thus quantifies sensitivity of the likelihood to adversarial perturbations of the pointwise preference probability. The uncertainty radius enters only through the linear prefactor $\lambda=\rho\beta$: increasing $\rho$ monotonically strengthens the penalty, while $\rho\downarrow 0$ recovers the nominal objective $L^{\mathrm{SAIL}}(\theta)$.
\end{remark}

\subsection{Regularity of of $L^{\mathrm{SAIL}}$ . }

Prior analyses of preference-based objectives often invoke stronger global conditions (e.g., PL-type geometries \cite{gaur2025sample}). Here we treat $L^{\mathrm{SAIL}}(\theta)$ under the following smoothness assumption.

\begin{assumption}[Smooth SAIL objective]
\label{ass:smooth_sail}
The SAIL objective $L^{\mathrm{SAIL}}$ has $L_{\mathrm{SAIL}}$-Lipschitz gradient:
\[
\|\nabla L^{\mathrm{SAIL}}(\theta)-\nabla L^{\mathrm{SAIL}}(\theta')\|\le L_{\mathrm{SAIL}}\|\theta-\theta'\|
\;\forall\,\theta,\theta'\in\R^d.
\]
\end{assumption}

\begin{remark}
    Assumption \ref{ass:smooth_sail} is a standard smoothness condition in first-order optimization, and it is routinely imposed in analyses of gradient-based methods for both convex and nonconvex objectives, including stochastic settings \citep{nesterov2013introductory,ghadimi2013stochastic,bottou2018optimization}. We adopt it here as a mild regularity requirement on $L^{\mathrm{SAIL}}$.
\end{remark}

\subsection{Weak convexity of $R(\theta)$}
It remains to understand the regularity of the robustness penalty $R(\theta)$. Although $|R(\theta;x,y_1,y_2)|$ is a convex function of $\theta$ for fixed $(x,y_1,y_2)$, the expectation in $R(\theta)$ is taken under the policy-dependent distribution $(x,y_1,y_2)\sim d_\theta$ (Eq. \eqref{eq:data_gen}). Since $d_\theta$ itself depends on $\theta$, convexity of the pointwise quantity
$|R(\theta; x, y_1, y_2)|$ does \emph{not} carry over to $R(\theta)$.
This coupling can destroy convexity, as evidenced by the following example.

\begin{example}
Consider the simplest setting with a single prompt $X=\{x\}$, two responses $Y=\{a,b\}$, and $d=1$. Fix $\theta_{\mathrm{ref}}=0$ and define features by $\psi(x,a)=1$ and $\psi(x,b)=0$, so that for $z=(x,y_1,y_2)$,
\[
R(\theta;x,y_1,y_2)=\theta\bigl(\psi(x,y_1)-\psi(x,y_2)\bigr)\in\{0,\pm\theta\}.
\]
Let the policy be $\pi_\theta(a\mid x)=\sigma(\theta)$ and $\pi_\theta(b\mid x)=\sigma(-\theta)$, and sample $y_1,y_2\iid\pi_\theta(\cdot\mid x)$. Then $|R(\theta;x,y_1,y_2)|=|\theta|\mathbf{1}\{y_1\neq y_2\}$, and hence
\begin{equation}
\label{eq:counterexample_R}
\begin{aligned}
    R(\theta)&=|\theta|\Pr(y_1\neq y_2)
=2|\theta|\,\pi_\theta(a\mid x)\pi_\theta(b\mid x)\\
&=2|\theta|\,\sigma(\theta)\sigma(-\theta)
= \frac{2|\theta|e^\theta}{(1+e^\theta)^2}.
\end{aligned}
\end{equation}
Convexity would imply $R(1)\le \tfrac{1}{2}(R(0)+R(2))=\tfrac{1}{2}R(2)$, but using \eqref{eq:counterexample_R} we obtain
\[
R(1)-\tfrac{1}{2}R(2)
=
\frac{2e(e-1)(e^3-1)}{(1+e)^2(1+e^2)^2}
>0,
\]
so $R$ is not convex in general.
\end{example}

%
Given the nonconvexity exhibited above, we control the curvature of $R(\theta)$ through weak convexity: $f$ is $\kappa$-weakly convex if $f(\cdot)+\tfrac{\kappa}{2}\|\cdot\|^2$ is convex. 
First we introduce several assumptions required for the analysis.

\begin{assumption}[Finite response set and bounded features]
\label{ass:finiteY_boundedpsi}
The response space $Y$ is finite, and the feature map satisfies $\|\psi(x,y)\|\le B_{\psi}< \infty$ for all $(x,y)\in X\times Y$.
\end{assumption}
\begin{remark}
The boundedness in Assumption \ref{ass:finiteY_boundedpsi} is standard in analyses of log-linear and softmax models \cite{agarwal2021theory, zhang2022convergence}. Without loss of generality, $B_{\psi}$ can be normalized to $1$ by rescaling the feature map. However, we keep $B_{\psi}$ explicit to highlight its impact on the weak convexity constant.
\end{remark}



\begin{assumption}[Bounded feasible set]
\label{ass:bounded_theta}
The feasible set $\Theta$ is nonempty, closed, convex, and bounded. In particular, with $\theta_{\mathrm{ref}}$ as in Eq.\eqref{ref_theta}, define
\[
D\ :=\ \sup_{\theta\in\Theta}\|\theta-\theta_{\mathrm{ref}}\|  < \infty.
\]
\end{assumption}
\begin{remark}
Assumption \ref{ass:bounded_theta} also ensures that the Euclidean projection operator onto $\Theta$,
\(
\Pi_{\Theta}(u):=\arg\min_{\theta\in\Theta}\|\theta-u\|_2,
\)
is well-defined and nonexpansive\cite{bauschke2020correction}.
\end{remark}



\begin{theorem}[Weak convexity of the robust penalty]
\label{thm:weak_convexity_R}
Under Assumptions \ref{ass:loglinear_sail}, \ref{ass:finiteY_boundedpsi}, \ref{ass:bounded_theta}, the robust penalty $R:\Theta\to\R$ defined in Eq. \eqref{robust_penalty} is $\kappa_R$-weakly convex on $\Theta$, with
\[
\kappa_R \le 16B_\psi^2 + 4DB_\psi^3.
\]
\end{theorem}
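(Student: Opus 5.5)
Writing $R(\theta)=\E_{x\sim\mu}\sum_{y_1,y_2\in Y}\pi_\theta(y_1\mid x)\pi_\theta(y_2\mid x)\,|g_{x,y_1,y_2}(\theta)|$ with $g_{x,y_1,y_2}(\theta):=R(\theta;x,y_1,y_2)$ affine in $\theta$, I will establish weak convexity through the standard characterization: a function $f$ on a convex set is $\kappa$-weakly convex iff for all $\theta,\theta'\in\Theta$ and some (Clarke) subgradient $v\in\partial f(\theta)$ one has $f(\theta')\ge f(\theta)+\langle v,\theta'-\theta\rangle-\tfrac{\kappa}{2}\|\theta'-\theta\|^2$. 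Equivalently, I will exhibit $R$ as a finite expectation of products $w(\theta)\,c(\theta)$, where $w(\theta)=\pi_\theta(y_1\mid x)\pi_\theta(y_2\mid x)\ge0$ is a smooth weight and $c(\theta)=|g(\theta)|$ is convex and Lipschitz, and then bound the curvature defect of each product uniformly. Assumption \ref{ass:finiteY_boundedpsi} makes the inner sum finite, so summing pointwise weak-convexity inequalities preserves the constant after averaging.

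\textbf{Key steps.} First, collect elementary bounds. Note $|g(\theta)|\le 2DB_\psi$ on $\Theta$ by Assumption \ref{ass:bounded_theta} and $\|\psi\|\le B_\psi$, and $c$ is $2B_\psi$-Lipschitz. For the log-linear policy (Assumption \ref{ass:loglinear_sail}), $\nabla\log\pi_\theta(y\mid x)=\psi(x,y)-\E_{\pi_\theta}\psi$, so $\|\nabla w\|\le 4B_\psi w$. Likewise $\nabla^2 w=w\bigl(\nabla\log w\nabla\log w^\top+\nabla^2\log w\bigr)$, with $\|\nabla^2\log\pi_\theta\|=\|\mathrm{Cov}_{\pi_\theta}(\psi)\|\le B_\psi^2$, giving $\|\nabla^2 w\|\lesssim B_\psi^2 w$. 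Second, expand the product inequality. For $v=c(\theta)\nabla w(\theta)+w(\theta)s$ with $s\in\partial c(\theta)$, write
\[
w(\theta')c(\theta')-w(\theta)c(\theta)-\langle v,\theta'-\theta\rangle
= w(\theta)\bigl[c(\theta')-c(\theta)-\langle s,\theta'-\theta\rangle\bigr]+c(\theta')\bigl[w(\theta')-w(\theta)-\langle\nabla w(\theta),\theta'-\theta\rangle\bigr]+\bigl(c(\theta')-c(\theta)\bigr)\langle\nabla w(\theta),\theta'-\theta\rangle .
\]
The first bracket is nonnegative by convexity of $c$. The second is at least $-\tfrac12\sup\|\nabla^2 w\|\,\|\theta'-\theta\|^2$, and combined with $c\le 2DB_\psi$ it yields the $DB_\psi^3$ term. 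The third is at least $-2B_\psi\|\nabla w\|\,\|\theta'-\theta\|^2$, which yields the $B_\psi^2$ term. Third, take $\E_x\sum_{y_1,y_2}$. Since $\sum w=1$, each $w$-proportional bound averages to a constant, and tracking constants should give $\kappa_R\le16B_\psi^2+4DB_\psi^3$.

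\textbf{Main obstacle.} The difficulty is that the Hessian bound of $w$ is evaluated along the segment rather than at $\theta$: it involves $\sup_t w(\theta+t(\theta'-\theta))$, and this is not proportional to $w(\theta)$. I will handle this by bounding $\sup_t\sum_{y_1,y_2}\|\nabla^2 w\|\le C B_\psi^2$ directly, using $\sum w=1$ at every point along the segment. This works because the sum over pairs is performed before taking the supremum, which I can do because the remainder identity above is linear in $w$ after summation. If this becomes messy, I will fall back to a one-dimensional check: restrict to lines $\theta+t u$ with $\|u\|=1$ and show that $t\mapsto R+\tfrac{\kappa}{2}t^2$ has a nondecreasing right derivative, bounding the distributional second derivative by the same three terms.
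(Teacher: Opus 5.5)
Your proof is correct and gives exactly the stated constant. It takes a different technical route from the paper.

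\textbf{How the two routes differ.} The paper smooths the absolute value with $\phi_\varepsilon(u)=\sqrt{u^2+\varepsilon^2}$, so that $R_\varepsilon$ is $C^2$. It expands $\nabla^2 R_\varepsilon$ by score-function calculus into three groups:
\begin{itemize}
\item $\nabla^2 r\succeq 0$;
\item the cross terms $S_\theta\nabla r^\top+\nabla r\,S_\theta^\top$;
\item $r\,(S_\theta S_\theta^\top+\nabla S_\theta)$.
\end{itemize}
It drops the two PSD pieces, shows $\nabla^2R_\varepsilon\succeq-\kappa_\varepsilon I$, and then lets $\varepsilon\downarrow 0$ using $0\le R_\varepsilon-R\le\varepsilon$. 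You instead work directly with the nonsmooth $c=|g|$, through an exact first-order remainder identity for the product $w\cdot c$. Your three brackets are the finite-difference counterparts of the paper's three groups. Convexity of $c$ replaces $\nabla^2 r\succeq0$, your third bracket is the cross term, and your second bracket is $r(S_\theta S_\theta^\top+\nabla S_\theta)$. The primitive bounds are the same in both: $\|\nabla\log w\|\le 4B_\psi$, $c$ is $2B_\psi$-Lipschitz, $\|\mathrm{Cov}_{\pi_\theta(\cdot\mid x)}(\psi)\|\le B_\psi^2$, and $0\le c\le 2DB_\psi$ on $\Theta$.

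\textbf{What your route buys.} You need no smoothing or limit step. You also produce an explicit vector $v(\theta)=\E_x\sum_{y_1,y_2}w\bigl(c\,\nabla\log w+\mathrm{sign}(g)\nabla g\bigr)$ satisfying the weak-convexity inequality, hence lying in the subdifferential of $R+I_\Theta$. This is precisely the score-function direction an oracle $G^R$ in Assumption~\ref{ass:stoch_oracles} would estimate. Also, you do not need to check that $v$ is a Clarke subgradient. Any $v$ depending only on $\theta$ that satisfies the inequality for all $\theta'\in\Theta$ exhibits $R+\frac{\kappa}{2}\|\cdot\|^2$ as a supremum of affine functions on $\Theta$.

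\textbf{What your route costs.} You must deal with the pair-dependent intermediate point in the Taylor remainder, which the paper's pointwise Hessian bound avoids. Your fix is the right one: use the integral form $\int_0^1(1-t)\,\Delta^\top\nabla^2 w(\theta+t\Delta)\,\Delta\,dt$, sum over $(y_1,y_2)$ before bounding, and use $\sum_{y_1,y_2}w=1$ at every point $\theta+t\Delta$.

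\textbf{Where the constant can slip.} In the second bracket, do not bound by the two-sided norm $\|\nabla^2 w\|$ as your sketch suggests. We have $\nabla^2 w=w\bigl(\nabla\log w\,\nabla\log w^\top+\nabla^2\log w\bigr)$, and the outer-product part can have norm $16B_\psi^2 w$. That bound would give about $32DB_\psi^3$ in place of $4DB_\psi^3$.

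Because $c(\theta')\ge 0$, only the negative part matters:
\[
\nabla^2 w\succeq w\,\nabla^2\log w=-2w\,\mathrm{Cov}_{\pi_\theta(\cdot\mid x)}(\psi)\succeq -2B_\psi^2\,w\,I.
\]
This is the analogue of the paper dropping $r\,S_\theta S_\theta^\top$. With it, the second bracket sums to at least $-\tfrac12\cdot 2DB_\psi\cdot 2B_\psi^2\|\Delta\|^2$. The third bracket sums to at least $-2B_\psi\cdot 4B_\psi\|\Delta\|^2$. The total is $-\tfrac12\bigl(16B_\psi^2+4DB_\psi^3\bigr)\|\Delta\|^2$, which is exactly the claimed $\kappa_R$.
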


\begin{proof}[Proof sketch of Theorem \ref{thm:weak_convexity_R}]
Write $z=(x,y_1,y_2)$ and $\Pi_\theta(z):=\mu(x)\pi_\theta(y_1| x)\pi_\theta(y_2| x)$, so $R(\theta)=\E_{z\sim\Pi_\theta}|s_\theta(z)|$ with $s_\theta(z):=(\theta-\theta_{\mathrm{ref}})^\top(\psi(x,y_1)-\psi(x,y_2))$. First, we smooth the absolute value via $\varphi_\varepsilon(u):=\sqrt{u^2+\varepsilon^2}$ and considers $R_\varepsilon(\theta):=\E_{z\sim\Pi_\theta}[\varphi_\varepsilon(s_\theta(z))]$. Since $Y$ is finite , $R_\varepsilon$ is a finite sum and derivatives can be exchanged with expectation. A log-derivative (score-function) calculation expresses $\nabla^2 R_\varepsilon(\theta)$ as an expectation of terms involving: (i) derivatives of $\varphi_\varepsilon\circ s_\theta$, (ii) the score $S_\theta(z):=\nabla_\theta\log\Pi_\theta(z)$ and its Jacobian. Bounded features bounds $\|S_\theta(z)\|$ and $\|\nabla_\theta S_\theta(z)\|_{\mathrm{op}}$, while bounded parameter set bound $\|\psi(x,y_1)-\psi(x,y_2)\|$ and $|s_\theta(z)|$ uniformly on $\Theta$. Combining these bounds yields a uniform lower bound $\nabla^2 R_\varepsilon(\theta)\succeq -\kappa_\varepsilon I$ with $\kappa_\varepsilon:=8GB_\psi+4MDB_\psi+2M\varepsilon$. Hence $R_\varepsilon$ is $\kappa_\varepsilon$-weakly convex, and letting $\varepsilon\downarrow 0$ preserves weak convexity with $\kappa_R \le 16B_\psi^2 + 4DB_\psi^3$.
\end{proof}

\begin{theorem}[Weak convexity of the composite objective ]
\label{thm:weak_convexity_composite}
Under Assumptions \ref{ass:nondegenerate_oracle}, \ref{ass:loglinear_sail}, \ref{ass:smooth_sail}, \ref{ass:finiteY_boundedpsi}, \ref{ass:bounded_theta}, the robust objective
$L^{W}_\rho(\theta)=L^{\mathrm{SAIL}}(\theta)+\lambda R(\theta)$ is $\kappa$-weakly convex on $\Theta$, with
\[
\kappa:=L_{\mathrm{SAIL}}+\lambda\kappa_R,
\qquad \lambda=\rho\beta.
\]
\end{theorem}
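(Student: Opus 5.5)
The plan is to combine Theorem~\ref{thm:decomposition_LW} with the two regularity results already established, and use the fact that weak convexity is additive under nonnegative combinations. By Theorem~\ref{thm:decomposition_LW} (which uses Assumptions~\ref{ass:nondegenerate_oracle} and~\ref{ass:loglinear_sail}), we have $L^W_\rho(\theta)=L^{\mathrm{SAIL}}(\theta)+\lambda R(\theta)$ on $\Theta$ with $\lambda=\rho\beta\ge 0$. It therefore suffices to show two things: that $L^{\mathrm{SAIL}}$ is $L_{\mathrm{SAIL}}$-weakly convex, and that $\lambda R$ is $\lambda\kappa_R$-weakly convex.

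For the first component, we show that an $L$-smooth function is $L$-weakly convex. Under Assumption~\ref{ass:smooth_sail}, the Cauchy--Schwarz inequality gives
\[
\langle \nabla L^{\mathrm{SAIL}}(\theta)-\nabla L^{\mathrm{SAIL}}(\theta'),\theta-\theta'\rangle\ge -L_{\mathrm{SAIL}}\|\theta-\theta'\|^2 .
\]
Equivalently, the gradient of $g(\theta):=L^{\mathrm{SAIL}}(\theta)+\tfrac{L_{\mathrm{SAIL}}}{2}\|\theta\|^2$ is monotone. Since $g$ is differentiable on $\R^d$, monotonicity of its gradient implies that $g$ is convex on $\R^d$, and hence convex on the convex set $\Theta$.

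For the second component, Theorem~\ref{thm:weak_convexity_R} states that $R+\tfrac{\kappa_R}{2}\|\cdot\|^2$ is convex on $\Theta$. Multiplying by $\lambda\ge 0$ preserves convexity, so $\lambda R+\tfrac{\lambda\kappa_R}{2}\|\cdot\|^2$ is convex on $\Theta$.

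Adding the two convex functions yields a convex function on $\Theta$:
\[
L^W_\rho+\tfrac{L_{\mathrm{SAIL}}+\lambda\kappa_R}{2}\|\cdot\|^2
=\Bigl(L^{\mathrm{SAIL}}+\tfrac{L_{\mathrm{SAIL}}}{2}\|\cdot\|^2\Bigr)+\Bigl(\lambda R+\tfrac{\lambda\kappa_R}{2}\|\cdot\|^2\Bigr).
\]
This is exactly $\kappa$-weak convexity with $\kappa=L_{\mathrm{SAIL}}+\lambda\kappa_R$. The same argument works if $\kappa_R$ is replaced by the explicit bound $16B_\psi^2+4DB_\psi^3$, because weak convexity is monotone in the constant.

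There is one point to check, and it is not a genuine obstacle: weak convexity must be understood on $\Theta$ (a convex set, by Assumption~\ref{ass:bounded_theta}), since Theorem~\ref{thm:weak_convexity_R} is only stated there. Both the smooth part and the restriction argument hold on any convex subset, so restricting everything to $\Theta$ is consistent.
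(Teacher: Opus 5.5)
Your proposal is correct and follows the paper's proof: use the decomposition from Theorem~\ref{thm:decomposition_LW}, note that the $L_{\mathrm{SAIL}}$-smooth part is $L_{\mathrm{SAIL}}$-weakly convex, scale Theorem~\ref{thm:weak_convexity_R} by $\lambda\ge 0$, and add the constants. Your monotone-gradient argument for the smooth part is slightly cleaner than the paper's. The paper instead cites the Hessian bound $\nabla^2 L^{\mathrm{SAIL}}\succeq -L_{\mathrm{SAIL}}I$, which tacitly assumes twice differentiability, something Assumption~\ref{ass:smooth_sail} does not provide.
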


\begin{proof}[Proof sketch]
By Assumption \ref{ass:smooth_sail}, $L^{\mathrm{SAIL}}$ has $L_{\mathrm{SAIL}}$-Lipschitz gradient and is therefore $L_{\mathrm{SAIL}}$-weakly convex. By Theorem \ref{thm:weak_convexity_R}, $R$ is $\kappa_R$-weakly convex on $\Theta$, so $\lambda R$ is $(\lambda\kappa_R)$-weakly convex. The sum of weakly convex functions is weakly convex with parameter given by the sum of the parameters, yielding $\kappa=L_{\mathrm{SAIL}}+\lambda\kappa_R$ for $L^{W}_\rho=L^{\mathrm{SAIL}}+\lambda R$ . 
\end{proof}

\subsection{Algorithm development}
We optimize the oracle-robust alignment objective $L^W_{\rho}(\theta)$ over the bounded convex parameter set $\Theta$. To enforce feasibility explicitly, we consider the constrained objective
\begin{equation}
F(\theta)\ :=\ L^W_{\rho}(\theta) + I_{\Theta}(\theta),
\qquad
I_{\Theta}(\theta)\ :=\
\begin{cases}
0, & \theta\in\Theta,\\
+\infty, & \theta\notin\Theta.
\end{cases}
\label{eq:constrained_objective}
\end{equation}

\begin{remark}
By Theorem~4.4, $L^W_{\rho}$ is $\kappa$-weakly convex on $\Theta$. Since $I_{\Theta}$ is convex,  the constrained objective $F$ in~\eqref{eq:constrained_objective} is also $\kappa$-weakly convex with the same constant $\kappa$.
\end{remark}

Because $F$ may be nonsmooth and only weakly convex, we measure first-order stationarity using the Moreau envelope\cite{moreau1965proximite}.

\begin{definition}[Moreau envelope and proximal point]
\label{def:moreau_envelope}
Fix $\lambda_{\mathrm{env}}\in(0,1/\kappa)$. The Moreau envelope of $F$ with parameter $\lambda_{\mathrm{env}}$ is
\begin{equation}
F_{\lambda_{\mathrm{env}}}(\theta)
\ :=\
\min_{u\in\R^d}\left\{
F(u) + \frac{1}{2\lambda_{\mathrm{env}}}\|u-\theta\|^2
\right\},
\label{eq:moreau_def}
\end{equation}
and the associated proximal point mapping is
\begin{equation}
\prox_{\lambda_{\mathrm{env}}F}(\theta)
\ :=\
\arg\min_{u\in\R^d}\left\{
F(u) + \frac{1}{2\lambda_{\mathrm{env}}}\|u-\theta\|^2
\right\}.
\label{eq:prox_def}
\end{equation}
When convenient, we write $\hat{\theta}:=\prox_{\lambda_{\mathrm{env}}F}(\theta)$.
\end{definition}

\begin{lemma}[Properties of the Moreau envelope \citep{davis2018stochastic}]
\label{lem:moreau_properties}
Assume $F$ is $\kappa$-weakly convex and bounded below, and let $\lambda_{\mathrm{env}}\in(0,1/\kappa)$. Then:
\begin{enumerate}
\item $F_{\lambda_{\mathrm{env}}}$ is finite and continuously differentiable on $\R^d$.
\item Its gradient is $\nabla F_{\lambda_{\mathrm{env}}}(\theta) =
\frac{1}{\lambda_{\mathrm{env}}}\bigl(\theta-\hat{\theta}\bigr),$
and in particular,
\begin{equation}
\|\theta-\hat{\theta}\|
\ =\
\lambda_{\mathrm{env}}\|\nabla F_{\lambda_{\mathrm{env}}}(\theta)\|.
\label{eq:moreau_residual}
\end{equation}
\item The gradient $\nabla F_{\lambda_{\mathrm{env}}}$ is Lipschitz with constant
\begin{equation}
L_{\mathrm{env}}
\ :=\
\frac{1}{\lambda_{\mathrm{env}}(1-\kappa\lambda_{\mathrm{env}})}.
\label{eq:Lenv}
\end{equation}
\item (Approximate stationarity) For $\hat{\theta}=\prox_{\lambda_{\mathrm{env}}F}(\theta)$,
\begin{equation}
\dist\bigl(0,\partial F(\hat{\theta})\bigr)
\ \le\
\|\nabla F_{\lambda_{\mathrm{env}}}(\theta)\|.
\label{eq:approx_stationarity}
\end{equation}
\end{enumerate}
\end{lemma}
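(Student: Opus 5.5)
The plan is the standard argument for weakly convex functions, following \citep{davis2018stochastic}. Set $\gamma:=\lambda_{\mathrm{env}}$. Define
\[
\phi_\theta(u):=F(u)+\tfrac{1}{2\gamma}\|u-\theta\|^2 .
\]
Since $F$ is $\kappa$-weakly convex and $\gamma<1/\kappa$, $\phi_\theta$ is $(1/\gamma-\kappa)$-strongly convex. It is also proper and lower semicontinuous: $L^W_\rho$ is continuous on the compact set $\Theta$ by Assumption \ref{ass:bounded_theta}, and $I_\Theta$ is lsc. Hence $\phi_\theta$ has a unique minimizer $\hat\theta$, and its minimum value $F_\gamma(\theta)$ is finite because $F$ is bounded below. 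This settles the finiteness part of item 1 and the well-definedness of the prox.

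For item 3, I will first show the prox map is Lipschitz. Compare the optimality conditions for $\hat\theta=\prox(\theta)$ and $\hat\theta'=\prox(\theta')$:
\[
(\theta-\hat\theta)/\gamma\in\partial F(\hat\theta),\qquad (\theta'-\hat\theta')/\gamma\in\partial F(\hat\theta').
\]
Weak convexity gives the hypomonotonicity inequality $\langle g-g',\hat\theta-\hat\theta'\rangle\ge-\kappa\|\hat\theta-\hat\theta'\|^2$ for subgradients $g,g'$. Substituting the two subgradients yields
\[
(1-\kappa\gamma)\|\hat\theta-\hat\theta'\|^2\le\langle\theta-\theta',\hat\theta-\hat\theta'\rangle .
\]
From this, Lipschitz continuity of the prox follows, and continuity of the gradient formula in item 2 then gives item 1's $C^1$ claim.

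For item 2, I will rewrite the envelope as
\[
F_\gamma(\theta)=\tfrac{1}{2\gamma}\|\theta\|^2-\sup_u\Bigl\{\tfrac1\gamma\langle\theta,u\rangle-F(u)-\tfrac{1}{2\gamma}\|u\|^2\Bigr\}.
\]
The inner function $F(u)+\frac{1}{2\gamma}\|u\|^2$ is strongly convex, so the supremum is a convex conjugate evaluated at $\theta/\gamma$. By Danskin's theorem and uniqueness of the maximizer, it is differentiable with gradient $\hat\theta/\gamma$. This gives
\[
\nabla F_\gamma(\theta)=(\theta-\hat\theta)/\gamma,
\]
and hence \eqref{eq:moreau_residual}. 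The Lipschitz constant then comes from the monotonicity inequality above. Write $a=\theta-\theta'$ and $b=\hat\theta-\hat\theta'$, so $\gamma\|\nabla F_\gamma(\theta)-\nabla F_\gamma(\theta')\|=\|a-b\|$. Expand
\[
\|a-b\|^2=\|a\|^2-2\langle a,b\rangle+\|b\|^2 ,
\]
and use both $\langle a,b\rangle\ge(1-\kappa\gamma)\|b\|^2$ and $\|b\|\le\|a\|/(1-\kappa\gamma)$. Getting exactly the constant $1/(\gamma(1-\kappa\gamma))$ in \eqref{eq:Lenv} is the fiddliest step. I would try this expansion first; if it falls short, I would argue via the convex function $\frac{1}{2\gamma}\|\cdot\|^2-F_\gamma$, whose gradient is $\hat\theta/\gamma$. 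That gradient is $(1/(\gamma(1-\kappa\gamma)))$-Lipschitz, and the two-sided bound on the Hessian then yields \eqref{eq:Lenv}.

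Item 4 is immediate from the first-order optimality of $\hat\theta$ for $\phi_\theta$. The sum rule holds because the quadratic is smooth, so $0\in\partial F(\hat\theta)+(\hat\theta-\theta)/\gamma$. Then
\[
\dist(0,\partial F(\hat\theta))\le\|\theta-\hat\theta\|/\gamma=\|\nabla F_\gamma(\theta)\| .
\]
The main technical care throughout is the subdifferential calculus: I would use the Fréchet subdifferential and the fact that, for weakly convex $F$, it coincides with $\partial(F+\frac\kappa2\|\cdot\|^2)-\kappa\,\mathrm{Id}$, which is a convex subdifferential. This makes all the monotonicity and sum-rule steps rigorous.
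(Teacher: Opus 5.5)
Your proposal is correct. Where the paper actually argues, you argue the same way. Existence and uniqueness of $\hat\theta$ come from the $(1/\lambda_{\mathrm{env}}-\kappa)$-strong convexity of the proximal subproblem, and item~4 comes directly from the optimality condition $\frac{1}{\lambda_{\mathrm{env}}}(\theta-\hat\theta)\in\partial F(\hat\theta)$.

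The difference is that the paper never proves the $C^1$ property, the gradient formula, or the constant \eqref{eq:Lenv}; it simply invokes ``standard properties'' of Moreau envelopes of weakly convex functions. You supply these yourself. The conjugate representation of $F_\gamma$ (your $\gamma=\lambda_{\mathrm{env}}$) gives differentiability and $\nabla F_\gamma(\theta)=(\theta-\hat\theta)/\gamma$. The hypomonotonicity inequality $(1-\kappa\gamma)\|b\|^2\le\langle a,b\rangle$ gives both Lipschitz continuity of the prox map and the smoothness constant.

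Your worry about that last step is unnecessary, because the expansion closes on its own. It gives $\|a-b\|^2\le\|a\|^2-(1-2\kappa\gamma)\|b\|^2$, so $\|a-b\|\le\|a\|$ when $\kappa\gamma\le 1/2$. For $\kappa\gamma>1/2$, inserting $\|b\|\le\|a\|/(1-\kappa\gamma)$ yields $\|a-b\|\le\frac{\kappa\gamma}{1-\kappa\gamma}\|a\|$. Thus $\nabla F_\gamma$ is $\max\{1/\gamma,\,\kappa/(1-\kappa\gamma)\}$-Lipschitz, and this is at most $1/\gamma+\kappa/(1-\kappa\gamma)=1/(\gamma(1-\kappa\gamma))$. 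So your route buys a self-contained proof and a slightly sharper constant than the one stated, while the paper's buys brevity by outsourcing to the literature.

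You are also right that the proximal point requires $F$ to be lower semicontinuous, which the paper's proof (and the lemma as stated) passes over. Note, though, that the continuity of $L^W_\rho$ you invoke does not come from Assumption~\ref{ass:bounded_theta}, which only makes $\Theta$ compact. It comes from Assumption~\ref{ass:smooth_sail} together with the Lipschitz bound on $R$ proved in Lemma~\ref{lem:Gtot_from_A6_eq22}. Alternatively, you can take lower semicontinuity directly from Assumption~\ref{ass:lower_bounded}.
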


Lemma~\ref{lem:moreau_properties} motivates $\|\nabla F_{\lambda_{\mathrm{env}}}(\theta)\|$ as a smooth stationarity measure: controlling $\|\nabla F_{\lambda_{\mathrm{env}}}(\theta)\|$ ensures (i) $\theta$ is close to its proximal point $\hat{\theta}$ via~\eqref{eq:moreau_residual} and (ii) $\hat{\theta}$ is nearly stationary for the original constrained problem via~\eqref{eq:approx_stationarity}. The envelope parameter $\lambda_{\mathrm{env}}$ governs the smoothing--bias trade-off: smaller $\lambda_{\mathrm{env}}$ reduces smoothing bias but increases $L_{\mathrm{env}}$ in~\eqref{eq:Lenv}.

\begin{assumption}[Stochastic gradient/subgradient oracles]
\label{ass:stoch_oracles}
Let $d_{\theta}$ denote the policy-induced sampling distribution on $Z=X\times Y\times Y$ from Eq. \eqref{eq:data_gen}. At iteration $t$, Algorithm~\ref{alg:rscgd} samples a mini-batch $Z_t=\{z_i\}_{i=1}^B$ with i.i.d.\ draws $z_i\sim d_{\theta_t}$. There exist measurable mappings
\[
G^{\mathrm{SAIL}}:\Theta\times Z^B\to\R^d,
\qquad
G^{R}:\Theta\times Z^B\to\R^d,
\]
such that for all $\theta\in\Theta$,
\[
\E\!\left[G^{\mathrm{SAIL}}(\theta;Z_t)\mid \theta\right]=\nabla L^{\mathrm{SAIL}}(\theta),
\;
\E\!\left[G^{R}(\theta;Z_t)\mid \theta\right]\in \partial R(\theta).
\]
Moreover, there exist constants $\sigma^2_{\mathrm{SAIL}},\sigma^2_{R}<\infty$ such that
\[
\E\!\left[\bigl\|G^{\mathrm{SAIL}}(\theta;Z_t)-\nabla L^{\mathrm{SAIL}}(\theta)\bigr\|^2\mid\theta\right]\le \frac{\sigma^2_{\mathrm{SAIL}}}{B},
\]
\[
\E\!\left[\dist^2\!\Bigl(G^{R}(\theta;Z_t),\partial R(\theta)\Bigr)\mid\theta\right]\le \frac{\sigma^2_{R}}{B}.
\]
Finally define the composite direction $G(\theta;Z_t)$ as in~\eqref{eq:composite_oracle}.
\end{assumption}
\begin{remark}
Assumption~\ref{ass:stoch_oracles} instantiates the standard \emph{stochastic first-order oracle} model.
Such assumptions are ubiquitous in the analysis of stochastic approximation and stochastic (sub)gradient methods; see, e.g.,\cite{ghadimi2013stochastic,agarwal2021theory,chen2024finding}.
\end{remark}


We use a projected stochastic composite gradient method to minimize $F(\theta)$ in~\eqref{eq:constrained_objective}. At iterate $\theta_t$, we sample prompts and response pairs according to the policy-induced sampling in Eq.\eqref{eq:data_gen}: draw $x\sim\mu$ and then $y_1,y_2\stackrel{\mathrm{i.i.d.}}{\sim}\pi_{\theta_t}(\cdot\mid x)$, forming a triple $z=(x,y_1,y_2)$. For the SAIL term $L^{\mathrm{SAIL}}(\theta)$, we additionally query a preference label from the oracle, which determines the ordered pair $(y_w,y_\ell)$ (equivalently, a Bernoulli label $y\in\{0,1\}$ indicating whether $y_1\succ y_2$). Using a mini-batch $Z_t$ of $B$ i.i.d.\ triples, we form a stochastic gradient estimator for $\nabla L^{\mathrm{SAIL}}(\theta_t)$ and a stochastic (sub)gradient estimator for the robust penalty $R(\theta_t)$ , and combine them with weight $\lambda=\rho\beta$:
\begin{equation}
G(\theta_t;Z_t)\ :=\ G^{\mathrm{SAIL}}(\theta_t;Z_t) + \lambda\, G^{R}(\theta_t;Z_t).
\label{eq:composite_oracle}
\end{equation}
The constrained formulation then yields the projected update
\begin{equation}
\theta_{t+1}
\ :=\
\proj_{\Theta}\!\left(\theta_t - \eta\, G(\theta_t;Z_t)\right),
\label{eq:projected_update}
\end{equation}
where $\proj_{\Theta}(v):=\arg\min_{u\in\Theta}\|u-v\|$ and $\eta>0$ is a stepsize. The projection ensures $\theta_t\in\Theta$ for all $t$, so the composite objective $F=L^W_\rho+I_\Theta$ is well-defined along the iterates.

\begin{algorithm}[t]
\caption{Robust Stochastic Composite Gradient Descent (R-SCGD)}
\label{alg:rscgd}
\begin{algorithmic}[1]
\STATE \textbf{Input:} $\theta_0\in\Theta$, stepsize $\eta>0$, horizon $T$, batch size $B$, weight $\lambda=\rho\beta$.
\FOR{$t=0,1,\dots,T-1$}
\STATE Sample a mini-batch $Z_t=\{(x_i,y_{1,i},y_{2,i})\}_{i=1}^B$ with $x_i\sim\mu$ and $y_{1,i},y_{2,i}\stackrel{\mathrm{i.i.d.}}{\sim}\pi_{\theta_t}(\cdot\mid x_i)$.
\STATE Query preference labels for the sampled pairs to construct $(y_{w,i},y_{\ell,i})$ (equivalently $y_i\in\{0,1\}$).
\STATE Compute stochastic oracles $G^{\mathrm{SAIL}}(\theta_t;Z_t)$ and $G^{R}(\theta_t;Z_t)$, and form $G(\theta_t;Z_t)=G^{\mathrm{SAIL}}(\theta_t;Z_t)+\lambda G^{R}(\theta_t;Z_t)$.
\STATE Update $\theta_{t+1} \leftarrow \proj_{\Theta}\!\bigl(\theta_t-\eta\,G(\theta_t;Z_t)\bigr)$.
\ENDFOR
\STATE Sample $R\sim\mathrm{Unif}\{0,1,\dots,T-1\}$.
\STATE \textbf{Output:} $\theta_R$ .
\end{algorithmic}
\end{algorithm}
\section{Analysis}
\subsection{Convergence analysis}

We analyze Algorithm~\ref{alg:rscgd} for the constrained objective $F$ in~\eqref{eq:constrained_objective} using the Moreau-envelope stationarity measure from Definition~\ref{def:moreau_envelope}. Throughout, we use the weak convexity constant $\kappa$ from Theorem \ref{thm:weak_convexity_composite} and assume the smoothness condition on $L^{\mathrm{SAIL}}$ stated in Assumption \ref{ass:smooth_sail}.

\begin{assumption}[Lower bounded objective]
\label{ass:lower_bounded}
The constrained objective $F(\theta)=L^W_{\rho}(\theta)+I_{\Theta}(\theta)$ is proper, lower semicontinuous, and bounded from below on $\R^d$; denote
\[
F_{\inf}\ :=\ \inf_{\theta\in\R^d} F(\theta)\ >\ -\infty.
\]
\end{assumption}
\begin{remark}
Lower boundedness is needed to telescope descent inequalities for the Moreau envelope and is natural for constrained likelihood-based objectives on a bounded parameter set.
\end{remark}


\textbf{Main Result:}
With the above assumptions in place, we are ready to present the main theoretical results of this work.
We first establish the convergence guarantee for Algorithm~\ref{alg:rscgd}, and then derive the corresponding oracle (sample) complexity bound.

\begin{theorem}[Convergence rate for the Moreau envelope]
\label{thm:moreau_convergence}
Let Assumptions~\ref{ass:nondegenerate_oracle}--\ref{ass:lower_bounded} hold and let $\kappa$ be the weak convexity constant from Theorem \ref{thm:weak_convexity_composite}. Fix any $\lambda_{\mathrm{env}}\in(0,1/\kappa)$ and run Algorithm~\ref{alg:rscgd} for $T$ iterations with stepsize $\eta>0$. Let $R\sim\mathrm{Unif}\{0,1,\dots,T-1\}$ denote the output index and $\theta_R$ the corresponding iterate. Lemma~\ref{lem:Gtot_from_A6_eq22} implies that
\[
\E\!\left[\|G(\theta;Z)\|^2\mid \theta\right]\le G_{\mathrm{tot}}^2
\qquad \forall \theta\in\Theta,
\]
where $Z$ denotes a generic mini-batch of size $B$ drawn i.i.d.\ from $d_{\theta}$ and $G_{\mathrm{tot}}^2$ is the explicit
constant given in Lemma~\ref{lem:Gtot_from_A6_eq22}.. Then
\begin{equation}
\E\!\left[\bigl\|\nabla F_{\lambda_{\mathrm{env}}}(\theta_R)\bigr\|^2\right]
\ \le\
\frac{F_{\lambda_{\mathrm{env}}}(\theta_0)-F_{\inf}}{\eta(1-\kappa\lambda_{\mathrm{env}})\,T}
\;+\;
\frac{L_{\mathrm{env}}\,\eta}{2(1-\kappa\lambda_{\mathrm{env}})}\,G_{\mathrm{tot}}^2,
\label{eq:moreau_rate}
\end{equation}
where $L_{\mathrm{env}}$ is defined in~\eqref{eq:Lenv}.
\end{theorem}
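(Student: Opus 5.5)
The plan is to run the Davis--Drusvyatskiy Moreau-envelope potential argument \citep{davis2018stochastic}, using $F_{\lambda_{\mathrm{env}}}(\theta_t)$ as the Lyapunov function. Fix $t$, set $\hat\theta_t:=\prox_{\lambda_{\mathrm{env}}F}(\theta_t)$, and note $\hat\theta_t\in\Theta$ because $F=+\infty$ off $\Theta$. Since $\hat\theta_t$ is feasible in the envelope's minimization at $\theta_{t+1}$,
\[
F_{\lambda_{\mathrm{env}}}(\theta_{t+1})\le F(\hat\theta_t)+\tfrac{1}{2\lambda_{\mathrm{env}}}\|\theta_{t+1}-\hat\theta_t\|^2 .
\]
Because $\hat\theta_t=\proj_\Theta(\hat\theta_t)$ and projection is nonexpansive, we get $\|\theta_{t+1}-\hat\theta_t\|^2\le\|\theta_t-\eta G_t-\hat\theta_t\|^2$. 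Expanding this square gives
\[
\|\theta_t-\hat\theta_t\|^2-2\eta\langle G_t,\theta_t-\hat\theta_t\rangle+\eta^2\|G_t\|^2 .
\]
Take the conditional expectation given $\theta_t$. By Assumption~\ref{ass:stoch_oracles}, $\E[G_t\mid\theta_t]=v_t:=\nabla L^{\mathrm{SAIL}}(\theta_t)+\lambda g_t$ for some $g_t\in\partial R(\theta_t)$, so $v_t\in\partial L^W_\rho(\theta_t)$. The second-moment term is bounded by $G_{\mathrm{tot}}^2$, as stated in the theorem. Combined with $F(\hat\theta_t)+\frac{1}{2\lambda_{\mathrm{env}}}\|\theta_t-\hat\theta_t\|^2=F_{\lambda_{\mathrm{env}}}(\theta_t)$, this yields
\[
\E[F_{\lambda_{\mathrm{env}}}(\theta_{t+1})\mid\theta_t]\le F_{\lambda_{\mathrm{env}}}(\theta_t)-\tfrac{\eta}{\lambda_{\mathrm{env}}}\langle v_t,\theta_t-\hat\theta_t\rangle+\tfrac{\eta^2}{2\lambda_{\mathrm{env}}}G_{\mathrm{tot}}^2 .
\]

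The key step is to lower bound the inner product using weak convexity (Theorem~\ref{thm:weak_convexity_composite}). Since $\theta_t,\hat\theta_t\in\Theta$, the subgradient inequality for a $\kappa$-weakly convex function gives
\[
F(\hat\theta_t)\ge F(\theta_t)+\langle v_t,\hat\theta_t-\theta_t\rangle-\tfrac{\kappa}{2}\|\hat\theta_t-\theta_t\|^2 .
\]
The function $u\mapsto F(u)+\frac{1}{2\lambda_{\mathrm{env}}}\|u-\theta_t\|^2$ is $(\lambda_{\mathrm{env}}^{-1}-\kappa)$-strongly convex and minimized at $\hat\theta_t$, so
\[
F(\theta_t)-F(\hat\theta_t)\ge \bigl(\tfrac{1}{\lambda_{\mathrm{env}}}-\tfrac{\kappa}{2}\bigr)\|\theta_t-\hat\theta_t\|^2 .
\]
Adding the two inequalities gives $\langle v_t,\theta_t-\hat\theta_t\rangle\ge(\lambda_{\mathrm{env}}^{-1}-\kappa)\|\theta_t-\hat\theta_t\|^2$. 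By Lemma~\ref{lem:moreau_properties}, this equals $\lambda_{\mathrm{env}}(1-\kappa\lambda_{\mathrm{env}})\|\nabla F_{\lambda_{\mathrm{env}}}(\theta_t)\|^2$. The one-step inequality then becomes
\[
\E[F_{\lambda_{\mathrm{env}}}(\theta_{t+1})]\le \E[F_{\lambda_{\mathrm{env}}}(\theta_t)]-\eta(1-\kappa\lambda_{\mathrm{env}})\E\|\nabla F_{\lambda_{\mathrm{env}}}(\theta_t)\|^2+\tfrac{\eta^2}{2\lambda_{\mathrm{env}}}G_{\mathrm{tot}}^2 .
\]

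Finally, telescope over $t=0,\dots,T-1$ and use $F_{\lambda_{\mathrm{env}}}\ge F_{\inf}$ (Assumption~\ref{ass:lower_bounded}). Dividing by $\eta(1-\kappa\lambda_{\mathrm{env}})T$ and averaging, which equals $\E\|\nabla F_{\lambda_{\mathrm{env}}}(\theta_R)\|^2$ for uniform $R$, gives the first term exactly. The noise term comes out as $\frac{\eta G_{\mathrm{tot}}^2}{2\lambda_{\mathrm{env}}(1-\kappa\lambda_{\mathrm{env}})}$. This is at most the stated $\frac{L_{\mathrm{env}}\eta}{2(1-\kappa\lambda_{\mathrm{env}})}G_{\mathrm{tot}}^2$, since $L_{\mathrm{env}}=\frac{1}{\lambda_{\mathrm{env}}(1-\kappa\lambda_{\mathrm{env}})}\ge\frac{1}{\lambda_{\mathrm{env}}}$, so the claim follows and is slightly loose.

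I expect the main obstacle to be the subgradient step. Theorem~\ref{thm:weak_convexity_R} gives weak convexity only on $\Theta$, so I must make sure the weak-convexity subgradient inequality applies with the specific element $v_t$ produced by the oracle, rather than with an arbitrary element of the Clarke subdifferential. I would handle this by defining $\partial R$ as the weakly convex (Fréchet) subdifferential of $R+I_\Theta$ restricted to $\Theta$, so that membership in it is exactly this inequality. I would also argue that the normal-cone component introduced by the constraint is harmless, because it only enters through the nonexpansive projection step used above.
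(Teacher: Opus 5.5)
Your proof is correct, but it obtains the one-step inequality by a different route than the paper. The paper's Lemma~\ref{lem:C:one_step} proceeds in three steps:
\begin{enumerate}
\item It shows that projection cannot increase the envelope, $F_{\lambda_{\mathrm{env}}}(\proj_\Theta(z))\le F_{\lambda_{\mathrm{env}}}(z)$.
\item It applies the descent lemma to $F_{\lambda_{\mathrm{env}}}$ at the unprojected point $\theta_t-\eta G_t$, using the $L_{\mathrm{env}}$-Lipschitz gradient from Lemma~\ref{lem:moreau_properties}(3).
\item It invokes Lemma~\ref{lem:C:monotonicity}, $\langle \xi_t,v_t\rangle\ge(1-\kappa\lambda_{\mathrm{env}})\|\xi_t\|^2$. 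This is derived by adding the weak-convexity inequalities at $\theta_t$ (with $v_t$) and at $\hat\theta_t$ (with $\xi_t\in\partial F(\hat\theta_t)$ from the prox optimality condition).
\end{enumerate}

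You instead insert the fixed candidate $u=\hat\theta_t$ into the envelope at $\theta_{t+1}$, and use nonexpansiveness of $\proj_\Theta$ together with $\hat\theta_t\in\Theta$. Your inner-product bound is exactly Lemma~\ref{lem:C:monotonicity} rewritten via $\theta_t-\hat\theta_t=\lambda_{\mathrm{env}}\xi_t$. You reach it through strong convexity of the proximal subproblem rather than through the optimality condition.

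What your route buys:
\begin{itemize}
\item It never uses smoothness of the envelope, which the paper only asserts as a standard result.
\item It does not need the projection-monotonicity of $F_{\lambda_{\mathrm{env}}}$.
\item It gives the noise term $\eta G_{\mathrm{tot}}^2/(2\lambda_{\mathrm{env}}(1-\kappa\lambda_{\mathrm{env}}))$, with $1/\lambda_{\mathrm{env}}$ in place of $L_{\mathrm{env}}$. This is smaller than the stated term by the factor $1-\kappa\lambda_{\mathrm{env}}$, so the claimed bound follows with room to spare.
\end{itemize}
The paper's route is more modular, a generic descent argument for a smooth potential, but it pays that extra factor.

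The subdifferential caveat you raise applies equally to both proofs; the paper simply takes $v_t\in\partial F(\theta_t)$ for granted. Reading $\partial$ as the Fr\'echet subdifferential makes the weak-convexity inequality with the specific oracle mean $v_t$ legitimate. Membership in $\partial F(\theta_t)$ then follows simply from $0\in N_\Theta(\theta_t)$.
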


\begin{corollary}[Sample complexity for envelope stationarity]
\label{cor:sample_complexity}
Under the conditions of Theorem~\ref{thm:moreau_convergence}, fix $\lambda_{\mathrm{env}}\in(0,1/\kappa)$ and set
\[
\eta\ :=\ \sqrt{\frac{2\lambda_{\mathrm{env}}(1-\kappa\lambda_{\mathrm{env}})\bigl(F_{\lambda_{\mathrm{env}}}(\theta_0)-F_{\inf}\bigr)}{G_{\mathrm{tot}}^2\,T}}.
\]
Then Algorithm~\ref{alg:rscgd} guarantees
\[
\E\!\left[\bigl\|\nabla F_{\lambda_{\mathrm{env}}}(\theta_R)\bigr\|^2\right]
\ \le\
\sqrt{\frac{2\,G_{\mathrm{tot}}^2\bigl(F_{\lambda_{\mathrm{env}}}(\theta_0)-F_{\inf}\bigr)}{\lambda_{\mathrm{env}}(1-\kappa\lambda_{\mathrm{env}})^3\,T}}.
\]
Consequently, to achieve $\E[\|\nabla F_{\lambda_{\mathrm{env}}}(\theta_R)\|^2]\le \varepsilon$, it suffices to take
\begin{equation*}
   \begin{aligned}
      T\ &\ge\
\frac{2\,G_{\mathrm{tot}}^2\bigl(F_{\lambda_{\mathrm{env}}}(\theta_0)-F_{\inf}\bigr)}{\lambda_{\mathrm{env}}(1-\kappa\lambda_{\mathrm{env}})^3\,\varepsilon^2} \\
&= \frac{8\left(
C + \frac{\sigma_{\mathrm{SAIL}}^2+\lambda^2\sigma_R^2}{B}
\right)\bigl(F_{\lambda_{\mathrm{env}}}(\theta_0)-F_{\inf}\bigr)}{\lambda_{\mathrm{env}}(1-\kappa\lambda_{\mathrm{env}})^3\,\varepsilon^2}.
   \end{aligned} 
\end{equation*}
where $C = G_{\nabla\mathrm{SAIL}}^2 + \lambda^2 G_{\partial R}^2$ are constant as defined in Lemma~\ref{lem:Gtot_from_A6_eq22}. If we set $ B = \widetilde{O}(1) $,then we obtain a sample complexity of  $BT = \widetilde{O}(\varepsilon^{-2})$

\end{corollary}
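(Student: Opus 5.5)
The corollary follows from the bound \eqref{eq:moreau_rate} of Theorem~\ref{thm:moreau_convergence} by optimizing over the stepsize $\eta$. We take that bound as given. Write $\Delta:=F_{\lambda_{\mathrm{env}}}(\theta_0)-F_{\inf}$ and $c:=1-\kappa\lambda_{\mathrm{env}}\in(0,1)$. Substituting $L_{\mathrm{env}}=1/(\lambda_{\mathrm{env}}c)$ from \eqref{eq:Lenv}, the right-hand side becomes
\[
\frac{\Delta}{\eta c T}+\frac{\eta\,G_{\mathrm{tot}}^2}{2\lambda_{\mathrm{env}}c^2}.
\]
This has the form $a/\eta+b\eta$ with $a=\Delta/(cT)$ and $b=G_{\mathrm{tot}}^2/(2\lambda_{\mathrm{env}}c^2)$. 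It is minimized at $\eta^\star=\sqrt{a/b}=\sqrt{2\lambda_{\mathrm{env}}c\,\Delta/(G_{\mathrm{tot}}^2T)}$, which is exactly the prescribed stepsize. The minimum value is $2\sqrt{ab}=2\sqrt{\Delta G_{\mathrm{tot}}^2/(2\lambda_{\mathrm{env}}c^3T)}=\sqrt{2G_{\mathrm{tot}}^2\Delta/(\lambda_{\mathrm{env}}c^3T)}$, which is the claimed bound. One side condition needs checking: Theorem~\ref{thm:moreau_convergence} may implicitly require $\eta$ to be small, for example $\eta\le\lambda_{\mathrm{env}}$ in the descent step. Since $\eta^\star\propto T^{-1/2}$, this holds once $T$ is large enough, and I would record that as a mild requirement on $T$.

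Next, I would set the bound $\sqrt{2G_{\mathrm{tot}}^2\Delta/(\lambda_{\mathrm{env}}c^3T)}$ to be at most $\varepsilon$. Squaring gives the first expression for $T$ directly. For the second expression I would use Lemma~\ref{lem:Gtot_from_A6_eq22}, in the form I expect: decompose $G=G^{\mathrm{SAIL}}+\lambda G^R$, apply $\|u+v\|^2\le 2\|u\|^2+2\|v\|^2$, and split each term into its mean plus noise. Bounding $\|\nabla L^{\mathrm{SAIL}}\|\le G_{\nabla\mathrm{SAIL}}$ and the subgradients of $R$ by $G_{\partial R}$, and using the variance bounds of Assumption~\ref{ass:stoch_oracles}, gives $G_{\mathrm{tot}}^2\le 4\bigl(C+(\sigma_{\mathrm{SAIL}}^2+\lambda^2\sigma_R^2)/B\bigr)$. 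Multiplying by $2$ yields the factor $8$ in the displayed expression.

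Finally, the sample count is $BT$. With $B=\widetilde O(1)$, all of $C$, $\sigma^2$, $\Delta$, $\lambda_{\mathrm{env}}$ and $c$ are problem constants independent of $\varepsilon$, so $BT=\widetilde O(\varepsilon^{-2})$. This is for the squared-gradient criterion $\E\|\nabla F_{\lambda_{\mathrm{env}}}\|^2\le\varepsilon$.

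The only real obstacle is matching constants with Lemma~\ref{lem:Gtot_from_A6_eq22}. I need to confirm that its $G_{\mathrm{tot}}^2$ equals $4(C+\cdots)$ rather than $2(C+\cdots)$, which depends on whether the lemma splits mean and noise with an extra factor of $2$. I need to confirm the same for the $\lambda^2$ weighting. If the lemma carries a different constant, the factor $8$ simply changes, and the $\widetilde O(\varepsilon^{-2})$ rate is unaffected.
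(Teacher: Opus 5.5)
Your proposal is correct and follows the paper's route: you plug the prescribed $\eta$ into \eqref{eq:moreau_rate}, correctly identifying it as the minimizer of $a/\eta+b\eta$ once $L_{\mathrm{env}}=1/(\lambda_{\mathrm{env}}(1-\kappa\lambda_{\mathrm{env}}))$ is substituted. You then invert the resulting bound for $T$ and substitute $G_{\mathrm{tot}}^2=4\bigl(C+(\sigma_{\mathrm{SAIL}}^2+\lambda^2\sigma_R^2)/B\bigr)$, which is exactly the constant in Lemma~\ref{lem:Gtot_from_A6_eq22}. Your exact evaluation $2\sqrt{ab}$ reproduces the stated constant, whereas the paper's intermediate display is looser by a factor of $2$. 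No small-stepsize side condition is needed, since the one-step inequality (Lemma~\ref{lem:C:one_step}) holds for every $\eta>0$.
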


\begin{proof}[Proof sketch of Theorem~\ref{thm:moreau_convergence}]
Fix $\lambda_{\mathrm{env}}\in(0,1/\kappa)$ and write $\xi_t:=\nabla F_{\lambda_{\mathrm{env}}}(\theta_t)$. By Lemma~\ref{lem:moreau_properties}, $F_{\lambda_{\mathrm{env}}}$ is $L_{\mathrm{env}}$-smooth with $L_{\mathrm{env}}$ given in~\eqref{eq:Lenv}. One establishes a one-step descent inequality for the projected update~\eqref{eq:projected_update} by combining:
(i) smoothness of $F_{\lambda_{\mathrm{env}}}$ to upper bound $F_{\lambda_{\mathrm{env}}}(\theta_{t+1})$ in terms of $F_{\lambda_{\mathrm{env}}}(\theta_t)$, $\langle \xi_t, G(\theta_t;Z_t)\rangle$, and $\|G(\theta_t;Z_t)\|^2$;
(ii) the fact that projection cannot increase the envelope value for $F=f+I_\Theta$;
and (iii) a weak-convexity monotonicity inequality relating $\langle \xi_t, v_t\rangle$ to $\|\xi_t\|^2$, where $v_t:=\E[G(\theta_t;Z_t)\mid \theta_t]\in\partial F(\theta_t)$ by Assumption~\ref{ass:stoch_oracles}. Taking conditional expectations and using $\E[\|G(\theta_t;Z_t)\|^2\mid\theta_t]\le G_{\mathrm{tot}}^2$ yields
\begin{equation}
    \begin{aligned}
        \E\!\left[F_{\lambda_{\mathrm{env}}}(\theta_{t+1})\mid \theta_t\right]
&\le
F_{\lambda_{\mathrm{env}}}(\theta_t)
-\eta(1-\kappa\lambda_{\mathrm{env}})\|\xi_t\|^2
+\frac{L_{\mathrm{env}}\eta^2}{2}\,\E\!\left[\|G(\theta_t;Z_t)\|^2\mid\theta_t\right].
    \end{aligned}
\end{equation}
Summing over $t=0,\dots,T-1$ gives a telescoping bound on $\sum_{t=0}^{T-1}\E[\|\xi_t\|^2]$ in terms of $F_{\lambda_{\mathrm{env}}}(\theta_0)-F_{\inf}$ (Assumption~\ref{ass:lower_bounded}) and $T\eta^2G_{\mathrm{tot}}^2$. Finally, selecting $R$ uniformly from $\{0,\dots,T-1\}$ converts the average to $\E[\|\nabla F_{\lambda_{\mathrm{env}}}(\theta_R)\|^2]$, yielding~\eqref{eq:moreau_rate}.
\end{proof}

\section{On the practical role of the proximal point}\label{sec:prox_point_practice}
Our analysis measures stationarity of the \emph{constrained} robust objective
\[
F(\theta)\ :=\ L_\rho^{W}(\theta)+I_{\Theta}(\theta),
\]
via the Moreau envelope $F_{\lambda_{\mathrm{env}}}$ with parameter $\lambda_{\mathrm{env}}\in(0,1/\kappa)$ (Definition \ref{def:moreau_envelope}), where $\kappa$ is the weak convexity constant from Theorem \ref{thm:weak_convexity_composite}. Algorithm~1 outputs a uniformly random iterate $\theta_R$. The theory (Theorem \ref{thm:moreau_convergence}) certifies \emph{envelope stationarity} at $\theta_R$ by controlling $\|\nabla F_{\lambda_{\mathrm{env}}}(\theta_R)\|_2$, rather than directly bounding $\dist(0,\partial F(\theta_R))$. The interpretation is instead through the associated proximal point
\[
\hat\theta_R\ :=\ \operatorname{prox}_{\lambda_{\mathrm{env}}F}(\theta_R)
\ \in\ \arg\min_{u\in\mathbb{R}^d}\Big\{F(u)+\tfrac{1}{2\lambda_{\mathrm{env}}}\|u-\theta_R\|_2^2\Big\},
\]
which is unique since $F$ is $\kappa$-weakly convex and $\lambda_{\mathrm{env}}<1/\kappa$.

Lemma \ref{lem:moreau_properties} provides the key link between envelope stationarity and proximity:
\[
\|\theta_R-\hat\theta_R\|_2 \ =\ \lambda_{\mathrm{env}}\|\nabla F_{\lambda_{\mathrm{env}}}(\theta_R)\|_2,
\]
and moreover $\dist(0,\partial F(\hat\theta_R))\le \|\nabla F_{\lambda_{\mathrm{env}}}(\theta_R)\|_2$.
Thus, small $\|\nabla F_{\lambda_{\mathrm{env}}}(\theta_R)\|_2$ simultaneously implies (i) $\theta_R$ lies in a small neighborhood of $\hat\theta_R$, and (ii) $\hat\theta_R$ is nearly first-order stationary for the original constrained objective.

A practical issue is that computing $\hat\theta_R$ exactly still requires solving the (generally nontrivial) strongly convex proximal subproblem above. Even when $\|\nabla F_{\lambda_{\mathrm{env}}}(\theta_R)\|_2$ is small (hence $\hat\theta_R$ is nearby), proximal operators frequently do not admit closed forms and are evaluated via iterative inner solvers, which can be expensive in realistic models \cite{article}.

Lemma \ref{lem:no-prox-output} shows that while $\dist(0,\partial F(\theta))\ge (1-\kappa\lambda_{\mathrm{env}})\|\nabla F_{\lambda_{\mathrm{env}}}(\theta)\|$ always holds, under $\kappa$-weak convexity alone there is \emph{no} universal constant $C$ such that
$\dist(0,\partial F(\theta))\le C\|\nabla F_{\lambda_{\mathrm{env}}}(\theta)\|$
for all $\theta$ (even when $\kappa=0$). This motivates phrasing guarantees in terms of the proximal point $\hat\theta_R$ rather than the raw iterate $\theta_R$.

In practice, one can compute an \emph{inexact} proximal refinement $\bar\theta_R\approx \hat\theta_R$ via a warm-started inner loop on
$\Psi_R(\theta):=F(\theta)+\tfrac{1}{2\lambda_{\mathrm{env}}}\|\theta-\theta_R\|_2^2$,where proximal subproblems are solved only up
to a prescribed accuracy (e.g., \citealt{schmidt2011convergence}).Appendix~\ref{app:prox-computation} formalizes this viewpoint for our setting and provides a residual-based stopping rule.
Lemma \ref{lem:inexact-prox-stationarity} yields the practical takeaway: if $\bar\theta_R$ satisfies the proximal residual condition
$\dist(0,\partial \Psi_R(\bar\theta_R))\le \varepsilon_{\mathrm{prox}}$,
then
\[
\dist(0,\partial F(\bar\theta_R))\ \le\ \|\nabla F_{\lambda_{\mathrm{env}}}(\theta_R)\|_2\ +\ \varepsilon_{\mathrm{prox}}\ +\ \tfrac{1}{\lambda_{\mathrm{env}}}\|\bar\theta_R-\hat\theta_R\|_2.
\]
Consequently, controlling $\|\nabla F_{\lambda_{\mathrm{env}}}(\theta_R)\|_2$ together with the inexactness terms $\varepsilon_{\mathrm{prox}}$ and $\|\bar{\theta}_R-\hat{\theta}_R\|_2$ yields an explicit near-stationarity certificate for the original objective $F$.

\section{Conclusion}\label{sec:conclusion}
We studied \emph{oracle-robust} online alignment of language models, where preference feedback is collected on-policy but the true preference oracle can deviate from the assumed model in a structured, worst-case manner. We introduced a pointwise oracle uncertainty set $U^{W}(P^\star,\rho)$ and defined the robust objective $L^{W}_{\rho}(\theta)$ as the worst-case negative log-likelihood over $P\in U^{W}(P^\star,\rho)$. Our main novelty is an \emph{exact closed-form decomposition} of this otherwise difficult minimax objective: $L^{W}_{\rho}(\theta)=L_{\mathrm{SAIL}}(\theta)+\lambda R(\theta)$ with $\lambda=\rho\beta$; We cast the constrained problem as minimizing $F(\theta)=L^{W}_{\rho}(\theta)+I_{\Theta}(\theta)$, analyze it as a weakly convex composite objective, and measure stationarity using the Moreau envelope $F_{\lambda_{\mathrm{env}}}$. 
We establish a $\widetilde{O}(\varepsilon^{-2})$ stochastic-oracle (sample) complexity for reaching an $\varepsilon$-stationary point of $F_{\lambda_{\mathrm{env}}}$, which in turn implies proximity to a nearly stationary point of the original constrained robust objective; as a special case, setting $\rho=0$ recovers a convergence-to-stationarity guarantee for optimizing $L_{\mathrm{SAIL}}(\theta)$.

\bibliographystyle{unsrt}  
\bibliography{references}  

\newpage
\appendix
\onecolumn
\input{appendixA}
\input{appendixB}
\input{appendixC}

\end{document}

%% file: appendixA.tex
\section{Proof of Theorem~4.1: Decomposition of $L^W_{\rho}(\theta)$}\label{app:A}

\subsection{A pointwise maximization identity}\label{app:A:pointwise}

\begin{lemma}[Pointwise worst-case Bernoulli perturbation]\label{lem:A:pointwise}
Fix any $p^\star\in[0,1]$, any $\rho\ge 0$ such that $[p^\star-\rho,p^\star+\rho]\subseteq[0,1]$, and any real numbers
$a,b\in\mathbb{R}$. Then
\begin{equation}\label{eq:A:pointwise}
\sup_{p\in[p^\star-\rho,p^\star+\rho]}\Bigl(p\,a+(1-p)\,b\Bigr)
= p^\star a+(1-p^\star)b+\rho\,|a-b|.
\end{equation}
Moreover, an optimizer is
$p^\star+\rho$ if $a\ge b$ and $p^\star-\rho$ if $a<b$.
\end{lemma}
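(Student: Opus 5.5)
The objective $g(p):=p\,a+(1-p)\,b=b+p\,(a-b)$ is affine in $p$ with slope $a-b$. The plan is to maximize this affine function over the compact interval $[p^\star-\rho,p^\star+\rho]$. That interval is nonempty because $\rho\ge 0$, and it lies inside $[0,1]$ by hypothesis, so every $p$ in it is a valid Bernoulli parameter. An affine function on a compact interval attains its supremum at an endpoint determined by the sign of its slope, so the supremum is actually a maximum.

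We split on the sign of $a-b$. If $a\ge b$, then $g$ is nondecreasing, so the maximum is attained at $p=p^\star+\rho$. There
\[
g(p^\star+\rho)=b+(p^\star+\rho)(a-b)=p^\star a+(1-p^\star)b+\rho\,(a-b),
\]
and $a-b=|a-b|$ in this case. If $a<b$, then $g$ is strictly decreasing, so the maximum is attained at $p=p^\star-\rho$. There
\[
g(p^\star-\rho)=p^\star a+(1-p^\star)b-\rho\,(a-b)=p^\star a+(1-p^\star)b+\rho\,|a-b|.
\]
In both cases we obtain \eqref{eq:A:pointwise}, together with the stated optimizer.

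A more compact route is to write $p=p^\star+\tau$ with $|\tau|\le\rho$. Then $g(p)=g(p^\star)+\tau(a-b)\le g(p^\star)+\rho|a-b|$, with equality at $\tau=\rho\,\mathrm{sign}(a-b)$, where we use $\tau=\rho$ when $a=b$. This gives the upper bound and its attainment in one line.

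There is no real obstacle here. The only point needing care is to state explicitly that the containment $[p^\star-\rho,p^\star+\rho]\subseteq[0,1]$ makes both endpoints admissible. In the application to Theorem~\ref{thm:decomposition_LW}, this containment is exactly what Assumption~\ref{ass:nondegenerate_oracle} supplies via $\rho<\delta$.
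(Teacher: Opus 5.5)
Your proof is correct and is the same argument as the paper's: you write the objective as the affine function $b+p(a-b)$, take the endpoint of the interval selected by the sign of $a-b$, and evaluate the two cases explicitly. Your $\tau$-reparametrization is just a compact one-line restatement of that same endpoint argument.
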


\begin{proof}
The map $p\mapsto p\,a+(1-p)\,b=b+p(a-b)$ is affine in $p$, hence it attains its maximum over the interval
$[p^\star-\rho,p^\star+\rho]$ at an endpoint.
If $a-b\ge 0$ the maximizer is $p^\star+\rho$, giving value
$b+(p^\star+\rho)(a-b)=p^\star a+(1-p^\star)b+\rho(a-b)$.
If $a-b<0$ the maximizer is $p^\star-\rho$, giving
$b+(p^\star-\rho)(a-b)=p^\star a+(1-p^\star)b+\rho(b-a)$.
Combining the two cases yields~\eqref{eq:A:pointwise}.
\end{proof}

\subsection{Proof of Theorem~4.1}\label{app:A:thm41}

\begin{proof}[Proof of Theorem~4.1]
Recall:
(i) the policy-dependent sampling law $d_\theta$ on $\mathcal{Z}=\mathcal{X}\times\mathcal{Y}\times\mathcal{Y}$
(Eq. \eqref{eq:data_gen},
(ii) the pointwise oracle uncertainty set $U^W(P^\star,\rho)$ (Eq. \eqref{eq:global_U}),
(iii) the robust objective $L^W_\rho(\theta)$ (Eq. \eqref{eq:Lrobust_def}),
and (iv) the SAIL objective $L^{\mathrm{SAIL}}(\theta)$.

\paragraph{Step 1: Rewrite the robust objective pointwise in $p(z)$.}
For $z=(x,y_1,y_2)$, let $p(z):=P(1\mid z)=P(y_1\succ y_2\mid x)$ and $p^\star(z):=P^\star(1\mid z)$.
By Eq. \eqref{eq:global_U}, $P\in U^W(P^\star,\rho)$ iff for all $z$,
\begin{equation}\label{eq:A:p_interval}
|p(z)-p^\star(z)|\le \rho.
\end{equation}
Assumption~1 ensures $p^\star(z)\in[\delta,1-\delta]$ and $\rho\in(0,\delta)$, so
$[p^\star(z)-\rho,p^\star(z)+\rho]\subseteq[0,1]$ for all $z$.

Define the pairwise logit as
\[
h_\theta(z)\equiv h_\theta(x,y_1,y_2)
:= \log\frac{\pi_\theta(y_1\mid x)}{\pi_{\mathrm{SFT}}(y_1\mid x)}
-\log\frac{\pi_\theta(y_2\mid x)}{\pi_{\mathrm{SFT}}(y_2\mid x)}.
\]
For notational convenience, set the two label-conditional losses
\begin{equation}\label{eq:A:ell10}
\ell_\theta^1(z):=-\log\sigma\bigl(\beta\,h_\theta(z)\bigr),\qquad
\ell_\theta^0(z):=-\log\sigma\bigl(\beta\,h_\theta(x,y_2,y_1)\bigr)
=-\log\sigma\bigl(-\beta\,h_\theta(z)\bigr),
\end{equation}
where we used $h_\theta(x,y_2,y_1)=-h_\theta(x,y_1,y_2)$.

Then, for any oracle $P$ with Bernoulli parameter $p(z)$, the conditional risk equals
\[
\mathbb{E}_{y\sim P(\cdot\mid z)}\bigl[\ell_\theta(z,y)\bigr]
= p(z)\,\ell_\theta^1(z)+(1-p(z))\,\ell_\theta^0(z)
=: \ell\bigl(p(z);h_\theta(z)\bigr).
\]
Therefore $L^W_\rho(\theta)$ can be written as
\begin{equation}\label{eq:A:robust_pointwise}
L^W_\rho(\theta)
=\sup_{P\in U^W(P^\star,\rho)}\ \mathbb{E}_{z\sim d_\theta}
\Bigl[p(z)\,\ell_\theta^1(z)+(1-p(z))\,\ell_\theta^0(z)\Bigr].
\end{equation}

Because the constraint~\eqref{eq:A:p_interval} is pointwise in $z$ and the objective is an integral (expectation)
of a pointwise affine function of $p(z)$, the worst-case oracle can be chosen pointwise in $z$.
Equivalently, the supremum in~\eqref{eq:A:robust_pointwise} equals the expectation of the pointwise supremum:
\begin{equation}\label{eq:A:swap_sup}
L^W_\rho(\theta)
= \mathbb{E}_{z\sim d_\theta}\left[\ \sup_{p\in[p^\star(z)-\rho,\,p^\star(z)+\rho]}
\Bigl(p\,\ell_\theta^1(z)+(1-p)\,\ell_\theta^0(z)\Bigr)\right].
\end{equation}
By Lemma~\ref{lem:A:pointwise}, an optimizer is always an endpoint, hence a measurable selector can be obtained
by taking $p(z)=p^\star(z)+\rho$ when $\ell_\theta^1(z)\ge \ell_\theta^0(z)$ and $p(z)=p^\star(z)-\rho$ otherwise.

Apply Lemma~\ref{lem:A:pointwise} to~\eqref{eq:A:swap_sup} with
$a=\ell_\theta^1(z)$, $b=\ell_\theta^0(z)$ and $p^\star=p^\star(z)$. This gives
\begin{align}
L^W_\rho(\theta)
&= \mathbb{E}_{z\sim d_\theta}\Bigl[p^\star(z)\,\ell_\theta^1(z)+(1-p^\star(z))\,\ell_\theta^0(z)\Bigr]
+ \rho\,\mathbb{E}_{z\sim d_\theta}\Bigl[|\ell_\theta^1(z)-\ell_\theta^0(z)|\Bigr].
\label{eq:A:two_terms}
\end{align}
The first expectation in~\eqref{eq:A:two_terms} is exactly $L^{\mathrm{SAIL}}(\theta)$.

For the second term, use the identity
\[
\log\sigma(t)-\log\sigma(-t)=t
\quad\text{(since $\sigma(t)/\sigma(-t)=e^t$).}
\]
With $t=\beta h_\theta(z)$ and definitions~\eqref{eq:A:ell10}, we obtain
\[
\ell_\theta^1(z)-\ell_\theta^0(z)
= -\log\sigma(\beta h_\theta(z))+\log\sigma(-\beta h_\theta(z))
= -\beta h_\theta(z).
\]
Hence $|\ell_\theta^1(z)-\ell_\theta^0(z)|=\beta |h_\theta(z)|$, and~\eqref{eq:A:two_terms} becomes
\begin{equation}\label{eq:A:decomp_h}
L^W_\rho(\theta)
= L_{\mathrm{SAIL}}(\theta)+\rho\beta\,\mathbb{E}_{z\sim d_\theta}\bigl[|h_\theta(z)|\bigr].
\end{equation}
Define $\lambda:=\rho\beta$.

Under Assumption \ref{ass:loglinear_sail}, for fixed $x$ we have the log-linear / softmax form
\[
\pi_\theta(y\mid x)=\frac{\exp(\theta^\top\psi(x,y))}{\sum_{y'\in\mathcal{Y}}\exp(\theta^\top\psi(x,y'))}.
\]
By Eq. \eqref{ref_theta},\ $\pi_{\mathrm{SFT}}=\pi_{\theta_{\mathrm{ref}}}$ for some $\theta_{\mathrm{ref}}$.
Then
\[
\log\frac{\pi_\theta(y\mid x)}{\pi_{\mathrm{SFT}}(y\mid x)}
=(\theta-\theta_{\mathrm{ref}})^\top\psi(x,y)
-\log\sum_{y'}e^{\theta^\top\psi(x,y')}
+\log\sum_{y'}e^{\theta_{\mathrm{ref}}^\top\psi(x,y')}.
\]
Taking the difference between $y_1$ and $y_2$ cancels the log-partition terms, yielding
\begin{equation}\label{eq:A:h_equals_s}
h_\theta(x,y_1,y_2)=(\theta-\theta_{\mathrm{ref}})^\top\bigl(\psi(x,y_1)-\psi(x,y_2)\bigr).
\end{equation}

Then~\eqref{eq:A:h_equals_s} shows $h_\theta(z)=R(\theta; x,y_1,y_2)$ where $R(\theta; x,y_1,y_2)$ is defined in Theorem \ref{thm:decomposition_LW}, and thus the robust correction term in~\eqref{eq:A:decomp_h}
is precisely $\lambda R(\theta)$ with
\[
R(\theta):=\mathbb{E}_{z\sim d_\theta}\bigl[|R(\theta; x,y_1,y_2)|\bigr].
\] 
Substituting into~\eqref{eq:A:decomp_h} gives the claimed decomposition
\[
L^W_\rho(\theta)=L_{\mathrm{SAIL}}(\theta)+\lambda R(\theta),
\qquad \lambda=\rho\beta.
\]
This completes the proof.
\end{proof}

%% file: appendixB.tex
\section{Weak convexity: Proofs of Theorems~4.7 and~4.8}\label{app:B}

\subsection{Auxiliary lemmas}\label{app:B:aux}

\begin{lemma}[Policy-regularity constants for the log-linear policy class]\label{lem:B:policy_reg}
Assume Assumption~2 (log-linear / softmax policy class) and Assumption~4 (finite response set and bounded features).
Define the policy score $g_\theta(x,y):=\nabla_\theta\log\pi_\theta(y\mid x)$.
Then for all $\theta,\theta'\in\mathbb{R}^d$ and all $(x,y)\in\mathcal{X}\times\mathcal{Y}$,
\[
\|g_\theta(x,y)\|_2\le G,\qquad
\|g_\theta(x,y)-g_{\theta'}(x,y)\|_2\le M\|\theta-\theta'\|_2,
\]
with explicit constants $G:=2B_\psi$ and $M:=B_\psi^2$.

Moreover, for $z=(x,y_1,y_2)$ and $d_\theta(z)=\mu(x)\pi_\theta(y_1\mid x)\pi_\theta(y_2\mid x)$, define score \cite{williams1992simple}
\[
S_\theta(z):=\nabla_\theta\log d_\theta(z)=g_\theta(x,y_1)+g_\theta(x,y_2).
\]
Then
\[
\|S_\theta(z)\|_2\le 2G=4B_\psi,\qquad
\|\nabla_\theta S_\theta(z)\|_{\mathrm{op}}\le 2M=2B_\psi^2.
\]
\end{lemma}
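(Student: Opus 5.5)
The plan is to compute the score and its Jacobian explicitly for the softmax class and bound each piece using only $\|\psi\|\le B_\psi$. For fixed $x$, Assumption~\ref{ass:loglinear_sail} gives
\[
\log\pi_\theta(y\mid x)=\theta^\top\psi(x,y)-\log\sum_{y'\in Y}e^{\theta^\top\psi(x,y')}.
\]
Since $Y$ is finite (Assumption~\ref{ass:finiteY_boundedpsi}), differentiation is legitimate and yields
\[
g_\theta(x,y)=\psi(x,y)-\E_{y'\sim\pi_\theta(\cdot\mid x)}[\psi(x,y')].
\]
The triangle inequality together with Jensen ($\|\E\psi\|\le\E\|\psi\|\le B_\psi$) immediately gives $\|g_\theta\|\le 2B_\psi=G$.

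For the Lipschitz bound, I differentiate once more. The term $\psi(x,y)$ does not depend on $\theta$, so
\[
\nabla_\theta g_\theta(x,y)=-\nabla_\theta\E_{\pi_\theta}[\psi]=-\mathrm{Cov}_{y'\sim\pi_\theta(\cdot\mid x)}\bigl(\psi(x,y')\bigr).
\]
This is a negative semidefinite matrix, so its operator norm equals the largest eigenvalue of the covariance. For any unit vector $u$ we have
\[
u^\top\mathrm{Cov}(\psi)u=\mathrm{Var}(u^\top\psi)\le\E[(u^\top\psi)^2]\le B_\psi^2.
\]
Hence $\|\nabla_\theta g_\theta\|_{\mathrm{op}}\le B_\psi^2=M$ uniformly in $\theta$. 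Applying the mean value inequality along the segment from $\theta'$ to $\theta$, which is valid because $g$ is $C^1$ on all of $\R^d$, gives $\|g_\theta-g_{\theta'}\|\le M\|\theta-\theta'\|$.

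For the pair score, $\mu(x)$ does not depend on $\theta$, so
\[
\nabla_\theta\log d_\theta(z)=g_\theta(x,y_1)+g_\theta(x,y_2).
\]
This holds wherever $\mu(x)>0$; for $\mu(x)=0$ we simply define $S_\theta$ this way, since such points carry no mass. The triangle inequality then gives $\|S_\theta\|\le 2G=4B_\psi$. Likewise,
\[
\nabla_\theta S_\theta=-\mathrm{Cov}_x(\psi)-\mathrm{Cov}_x(\psi),
\]
whose operator norm is at most $2M=2B_\psi^2$.

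The only step requiring care is the operator-norm bound for the covariance. The crude route $\|\E[\psi\psi^\top]\|+\|\E\psi\|^2\le 2B_\psi^2$ would lose a factor of $2$. The variance argument above, which drops the mean term, is what gives exactly $M=B_\psi^2$. Everything else is direct computation.
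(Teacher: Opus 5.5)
Your proposal is correct and follows essentially the same route as the paper's proof. Both write the score explicitly as $g_\theta=\psi-\E_{\pi_\theta}[\psi]$ and bound it by the triangle inequality and Jensen; both identify the Jacobian as minus the covariance and bound its operator norm via $\mathrm{Var}(u^\top\psi)\le B_\psi^2$, apply the mean value inequality for the Lipschitz bound, and use the triangle inequality for $S_\theta$. Your extra remarks, namely the $\mu(x)=0$ convention and calling it the mean value \emph{inequality} for vector-valued maps, are minor refinements of the same argument.
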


\begin{proof}
Under Assumption~2, for fixed $x$ the policy is
\[
\pi_\theta(y\mid x)=\frac{\exp(\theta^\top\psi(x,y))}{\sum_{y'\in\mathcal{Y}}\exp(\theta^\top\psi(x,y'))}.
\]
Hence
\[
\log\pi_\theta(y\mid x)=\theta^\top\psi(x,y)-\log\sum_{y'\in\mathcal{Y}}e^{\theta^\top\psi(x,y')},
\]
so differentiating gives
\[
g_\theta(x,y)
=\psi(x,y)-\sum_{y'\in\mathcal{Y}}\pi_\theta(y'\mid x)\psi(x,y')
= \psi(x,y)-\mu_\theta(x),
\quad
\mu_\theta(x):=\mathbb{E}_{y'\sim\pi_\theta(\cdot\mid x)}[\psi(x,y')].
\]
By Assumption~4, $\|\psi(x,y)\|_2\le B_\psi$ and Jensen implies $\|\mu_\theta(x)\|_2\le B_\psi$,
hence $\|g_\theta(x,y)\|_2\le 2B_\psi$. This gives $G:=2B_\psi$.

Next, $\nabla_\theta g_\theta(x,y)=-\nabla_\theta\mu_\theta(x)$.
A standard calculation yields that the Jacobian of $\mu_\theta(x)$ is the covariance matrix
\[
\nabla_\theta\mu_\theta(x)=\mathrm{Cov}_{y'\sim\pi_\theta(\cdot\mid x)}\!\bigl(\psi(x,y')\bigr).
\]
For any unit vector $v\in\mathbb{R}^d$,
\[
v^\top\nabla_\theta\mu_\theta(x)v
=\mathrm{Var}\bigl(v^\top\psi(x,y')\bigr)
\le \mathbb{E}\bigl[(v^\top\psi(x,y'))^2\bigr]
\le \mathbb{E}\|\psi(x,y')\|_2^2
\le B_\psi^2.
\]
Taking the supremum over $\|v\|_2=1$ yields $\|\nabla_\theta\mu_\theta(x)\|_{\mathrm{op}}\le B_\psi^2$,
so $\|\nabla_\theta g_\theta(x,y)\|_{\mathrm{op}}\le B_\psi^2$ as well. By the mean value theorem,
\[
\|g_\theta(x,y)-g_{\theta'}(x,y)\|_2\le B_\psi^2\|\theta-\theta'\|_2,
\]
so we may take $M:=B_\psi^2$.

Finally, since $S_\theta(z)=g_\theta(x,y_1)+g_\theta(x,y_2)$, the bounds
$\|S_\theta(z)\|_2\le 2G$ and $\|\nabla_\theta S_\theta(z)\|_{\mathrm{op}}\le 2M$ follow by triangle inequality.
\end{proof}

\begin{lemma}[Bounded Fisher information]\label{lem:B:fisher}
Define the Fisher information matrix
\[
\mathcal{F}(\theta)
:=\mathbb{E}_{x\sim\mu,\ y\sim \pi_\theta(\cdot\mid x)}\bigl[g_\theta(x,y)g_\theta(x,y)^\top\bigr].
\]
Then $\|\mathcal{F}(\theta)\|_{\mathrm{op}}\le G^2$ for all $\theta\in\Theta$.
\end{lemma}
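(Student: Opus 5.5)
The bound follows from Lemma~\ref{lem:B:policy_reg}, which gives $\|g_\theta(x,y)\|_2\le G$ uniformly over $(x,y)$ and all $\theta$.

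First, we record that $\mathcal{F}(\theta)$ is symmetric positive semidefinite. It is an expectation of rank-one matrices $g g^\top\succeq 0$, and since $\mathcal{Y}$ is finite under Assumption~\ref{ass:finiteY_boundedpsi}, the inner expectation is a finite sum, so the outer expectation over $\mu$ is well defined by boundedness. Because $\mathcal{F}(\theta)$ is symmetric positive semidefinite, its operator norm equals $\sup_{\|v\|_2=1} v^\top\mathcal{F}(\theta)v$.

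Next, fix a unit vector $v$. Linearity of expectation gives $v^\top\mathcal{F}(\theta)v=\mathbb{E}\bigl[(v^\top g_\theta(x,y))^2\bigr]$. By Cauchy--Schwarz, $(v^\top g_\theta)^2\le\|g_\theta\|_2^2\le G^2$ pointwise, so the expectation is at most $G^2$. Taking the supremum over unit $v$ then yields $\|\mathcal{F}(\theta)\|_{\mathrm{op}}\le G^2=4B_\psi^2$. An equivalent route is Jensen's inequality for the convex map $A\mapsto\|A\|_{\mathrm{op}}$, combined with $\|gg^\top\|_{\mathrm{op}}=\|g\|_2^2$.

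The only point needing care is that the bound is uniform in $\theta$. This holds because the constant $G$ in Lemma~\ref{lem:B:policy_reg} does not depend on $\theta$: it uses only $\|\psi\|\le B_\psi$ and Jensen's inequality for the mean feature. The result therefore holds on all of $\mathbb{R}^d$, and in particular on $\Theta$, so there is no real obstacle.
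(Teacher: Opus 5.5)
Your proof is correct and follows essentially the same route as the paper: it bounds the quadratic form $u^\top\mathcal{F}(\theta)u=\mathbb{E}[\langle u,g_\theta(x,y)\rangle^2]\le\mathbb{E}\|g_\theta(x,y)\|_2^2\le G^2$ for unit $u$ using the uniform score bound from Lemma~\ref{lem:B:policy_reg}, then takes the supremum. You also state explicitly that $\mathcal{F}(\theta)$ is symmetric positive semidefinite, so its operator norm equals this supremum; the paper uses that fact without comment.
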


\begin{proof}
For any unit vector $u\in\mathbb{R}^d$,
\[
u^\top\mathcal{F}(\theta)u
=\mathbb{E}\bigl[\langle u,g_\theta(x,y)\rangle^2\bigr]
\le \mathbb{E}\bigl[\|g_\theta(x,y)\|_2^2\bigr]
\le G^2.
\]
Taking the supremum over $\|u\|_2=1$ yields $\|\mathcal{F}(\theta)\|_{\mathrm{op}}\le G^2$.
\end{proof}

\begin{remark}\label{rem:B:opnorm}
In the weak convexity analysis of $R(\theta)$,
we differentiate an expectation taken under the \emph{$\theta$-dependent} law $d_\theta$.
The resulting Hessian formula contains curvature terms involving the score and its Jacobian,
notably $r_{\varepsilon,\theta}(z)\,\nabla_\theta S_\theta(z)$ where
$S_\theta(z)=\nabla_\theta\log d_\theta(z)$ and $r_{\varepsilon,\theta}(z)$ is a smoothed absolute value.
To obtain a \emph{uniform} lower bound $\nabla^2 R_\varepsilon(\theta)\succeq -\kappa_\varepsilon I$ (hence weak convexity),
we must control the worst-case quadratic form $v^\top(\nabla_\theta S_\theta(z))v$ uniformly over $\|v\|_2=1$.
This is exactly an operator-norm control. Lemma~\ref{lem:B:fisher} exemplifies this type of bound
for second-moment (information) matrices, and the proof of Lemma~\ref{lem:B:policy_reg}
uses the same operator-norm reasoning to bound $\|\nabla_\theta S_\theta(z)\|_{\mathrm{op}}$,
which is the quantity that directly enters the proof of Theorem \ref{thm:weak_convexity_R}.
\end{remark}

\subsection{Proof of Theorem~4.7 (weak convexity of the robust penalty)}\label{app:B:thm47}

\begin{proof}[Proof of Theorem~4.7]
Recall the robust penalty 
\[
R(\theta):=\mathbb{E}_{z\sim d_\theta}\bigl[|R(\theta; x,y_1,y_2)|\bigr],
\;
R(\theta; x,y_1,y_2):=(\theta-\theta_{\mathrm{ref}})^\top\bigl(\psi(x,y_1)-\psi(x,y_2)\bigr),
\quad z=(x,y_1,y_2).
\]

\paragraph{Step 1: Smooth approximation.}
Fix $\varepsilon>0$ and define $\phi_\varepsilon(u):=\sqrt{u^2+\varepsilon^2}$.
Set
\[
R_\varepsilon(\theta):=\mathbb{E}_{z\sim d_\theta}\bigl[\phi_\varepsilon(s_\theta(z))\bigr].
\]
Since $\mathcal{Y}$ is finite, the expectation over $d_\theta$ is a finite sum, so $R_\varepsilon$
is twice continuously differentiable and we may differentiate under the sum.

We will show that $\nabla^2 R_\varepsilon(\theta)\succeq -\kappa_\varepsilon I$ for all $\theta\in\Theta$ with
\begin{equation}\label{eq:B:kappa_eps}
\kappa_\varepsilon:=8GB_\psi + 4MDB_\psi + 2M\varepsilon,
\end{equation}
where $G,M$ are as in Lemma~\ref{lem:B:policy_reg}. Since $R_\varepsilon$ is $C^2$ and has a global Hessian lower bound,
the function $R_\varepsilon(\cdot)+\frac{\kappa_\varepsilon}{2}\|\cdot\|_2^2$ has positive semidefinite Hessian and is convex;
equivalently, $R_\varepsilon$ is $\kappa_\varepsilon$-weakly convex.

\paragraph{Step 2: Uniform bounds.}
Define $\Delta\psi(z):=\psi(x,y_1)-\psi(x,y_2)$. By Assumption \ref{ass:finiteY_boundedpsi},
\[
\|\Delta\psi(z)\|_2\le \|\psi(x,y_1)\|_2+\|\psi(x,y_2)\|_2\le 2B_\psi.
\]
By Assumption \ref{ass:bounded_theta}, $\|\theta-\theta_{\mathrm{ref}}\|_2\le D$ for all $\theta\in\Theta$, hence
\[
|s_\theta(z)|\le \|\theta-\theta_{\mathrm{ref}}\|_2\,\|\Delta\psi(z)\|_2 \le 2DB_\psi.
\]
Moreover, $\phi_\varepsilon(u)\le |u|+\varepsilon$, so
\begin{equation}\label{eq:B:r_bound}
0\le \phi_\varepsilon(s_\theta(z))\le 2DB_\psi+\varepsilon.
\end{equation}

\paragraph{Step 3: Score-function calculus for $d_\theta$.}
Write $d_\theta(z)=\mu(x)\pi_\theta(y_1\mid x)\pi_\theta(y_2\mid x)$ and define the score\cite{williams1992simple}
\[
S_\theta(z):=\nabla_\theta\log d_\theta(z).
\]
Then $S_\theta(z)=g_\theta(x,y_1)+g_\theta(x,y_2)$, hence Lemma~\ref{lem:B:policy_reg} gives
\begin{equation}\label{eq:B:score_bounds}
\|S_\theta(z)\|_2\le 2G,\qquad \|\nabla_\theta S_\theta(z)\|_{\mathrm{op}}\le 2M.
\end{equation}

For the integrand $r_{\varepsilon,\theta}(z):=\phi_\varepsilon(s_\theta(z))$, note that
\[
\nabla_\theta s_\theta(z)=\Delta\psi(z),
\qquad
\nabla_\theta r_{\varepsilon,\theta}(z)
=\phi_\varepsilon'(s_\theta(z))\,\Delta\psi(z),
\]
and since $|\phi_\varepsilon'(u)|=|u|/\sqrt{u^2+\varepsilon^2}\le 1$,
\begin{equation}\label{eq:B:grad_r_bound}
\|\nabla_\theta r_{\varepsilon,\theta}(z)\|_2 \le \|\Delta\psi(z)\|_2 \le 2B_\psi.
\end{equation}
Also,
\[
\nabla_\theta^2 r_{\varepsilon,\theta}(z)
=\phi_\varepsilon''(s_\theta(z))\,\Delta\psi(z)\Delta\psi(z)^\top,
\qquad
\phi_\varepsilon''(u)=\frac{\varepsilon^2}{(u^2+\varepsilon^2)^{3/2}}\ge 0,
\]
so $\nabla_\theta^2 r_{\varepsilon,\theta}(z)\succeq 0$ for all $(\theta,z)$.

Because $\mathcal{Y}$ is finite, we can differentiate $R_\varepsilon(\theta)=\sum_z d_\theta(z)\,r_{\varepsilon,\theta}(z)$
term-by-term and use $\nabla_\theta d_\theta(z)=d_\theta(z)S_\theta(z)$ to obtain:
\begin{align}
\nabla_\theta R_\varepsilon(\theta)
&=\mathbb{E}_{z\sim d_\theta}\bigl[\nabla_\theta r_{\varepsilon,\theta}(z)+r_{\varepsilon,\theta}(z)S_\theta(z)\bigr],\label{eq:B:grad_formula}\\
\nabla_\theta^2 R_\varepsilon(\theta)
&=\mathbb{E}_{z\sim d_\theta}\Bigl[
\nabla_\theta^2 r_{\varepsilon,\theta}(z)
+S_\theta(z)\nabla_\theta r_{\varepsilon,\theta}(z)^\top
+\nabla_\theta r_{\varepsilon,\theta}(z)S_\theta(z)^\top
+r_{\varepsilon,\theta}(z)\bigl(S_\theta(z)S_\theta(z)^\top+\nabla_\theta S_\theta(z)\bigr)
\Bigr].\label{eq:B:hess_formula}
\end{align}

\paragraph{Step 4: Hessian lower bound.}
Fix any unit vector $u\in\mathbb{R}^d$.
Using~\eqref{eq:B:hess_formula} and the facts that
$\nabla_\theta^2 r_{\varepsilon,\theta}(z)\succeq 0$ and $S_\theta(z)S_\theta(z)^\top\succeq 0$, we have
\begin{align*}
u^\top\nabla_\theta^2 R_\varepsilon(\theta)u
&\ge \mathbb{E}_{z\sim d_\theta}\Bigl[
u^\top\bigl(S_\theta\nabla_\theta r_{\varepsilon,\theta}^\top+\nabla_\theta r_{\varepsilon,\theta}S_\theta^\top\bigr)u
+r_{\varepsilon,\theta}(z)\,u^\top(\nabla_\theta S_\theta(z))u
\Bigr].
\end{align*}
For the first term,
\[
u^\top\bigl(S_\theta\nabla_\theta r_{\varepsilon,\theta}^\top+\nabla_\theta r_{\varepsilon,\theta}S_\theta^\top\bigr)u
=2\langle u,S_\theta(z)\rangle\,\langle u,\nabla_\theta r_{\varepsilon,\theta}(z)\rangle
\ge -2\|S_\theta(z)\|_2\,\|\nabla_\theta r_{\varepsilon,\theta}(z)\|_2,
\]
and using~\eqref{eq:B:score_bounds} and~\eqref{eq:B:grad_r_bound} gives the pointwise bound
\[
u^\top\bigl(S_\theta\nabla_\theta r_{\varepsilon,\theta}^\top+\nabla_\theta r_{\varepsilon,\theta}S_\theta^\top\bigr)u
\ge -2(2G)(2B_\psi)=-8GB_\psi.
\]
For the second term, by~\eqref{eq:B:score_bounds},
$u^\top(\nabla_\theta S_\theta(z))u\ge -\|\nabla_\theta S_\theta(z)\|_{\mathrm{op}}\ge -2M$,
so with~\eqref{eq:B:r_bound} we have
\[
r_{\varepsilon,\theta}(z)\,u^\top(\nabla_\theta S_\theta(z))u
\ge -(2DB_\psi+\varepsilon)\,2M
= -4MDB_\psi -2M\varepsilon.
\]
Combining the two bounds and taking expectation yields
\[
u^\top\nabla_\theta^2 R_\varepsilon(\theta)u \ge -(8GB_\psi+4MDB_\psi+2M\varepsilon)= -\kappa_\varepsilon,
\]
for all unit $u$, hence $\nabla_\theta^2 R_\varepsilon(\theta)\succeq -\kappa_\varepsilon I$.

\paragraph{Step 5: Pass to the nonsmooth limit $\varepsilon\downarrow 0$.}
For any $u\in\mathbb{R}$, $0\le \phi_\varepsilon(u)-|u|\le \varepsilon$, hence
$0\le R_\varepsilon(\theta)-R(\theta)\le \varepsilon$ for all $\theta$.
Thus $R_\varepsilon\to R$ uniformly as $\varepsilon\downarrow 0$.
Since each $R_\varepsilon$ is $\kappa_\varepsilon$-weakly convex and $\kappa_\varepsilon\downarrow \kappa_R:=8GB_\psi+4MDB_\psi$,
taking $\varepsilon\downarrow 0$ in the defining weak convexity inequality yields that $R$ is $\kappa_R$-weakly convex.

Finally, by Lemma~\ref{lem:B:policy_reg}, we may take $G=2B_\psi$ and $M=B_\psi^2$, giving
\[
\kappa_R \le 8(2B_\psi)B_\psi + 4(B_\psi^2)D B_\psi
=16B_\psi^2 + 4DB_\psi^3,
\]
which is the claimed bound in Theorem~4.7.
\end{proof}

\subsection{Proof of Theorem~4.8}\label{app:B:thm48}

\begin{proof}[Proof of Theorem~4.8]
By Theorem \ref{thm:decomposition_LW} we have the exact decomposition 
\[
L^W_\rho(\theta)=L_{\mathrm{SAIL}}(\theta)+\lambda R(\theta),
\qquad \lambda=\rho\beta.
\]
Under Assumption \ref{ass:smooth_sail}, $L_{\mathrm{SAIL}}$ has $L_{\mathrm{SAIL}}$-Lipschitz gradient on $\mathbb{R}^d$.
A basic fact is that any $L$-smooth function is $L$-weakly convex. This can be verified directly from the second-order characterization: if $f$ is $L$-smooth, then $\nabla^2 f(\theta)\succeq -L I$. Equivalently, the function
\[
\theta \;\mapsto\; f(\theta) + \frac{L}{2}\|\theta\|_2^2
\]
is convex.

By Theorem \ref{thm:weak_convexity_R}, $R$ is $\kappa_R$-weakly convex on $\Theta$. Scaling preserves weak convexity, so $\lambda R$
is $(\lambda\kappa_R)$-weakly convex. Sums of weakly convex functions add their curvature parameters,
hence $L^W_\rho=L_{\mathrm{SAIL}}+\lambda R$ is $\kappa$-weakly convex on $\Theta$ with
\[
\kappa:=L_{\mathrm{SAIL}}+\lambda\kappa_R.
\]
This concludes the proof of Theorem \ref{thm:weak_convexity_composite}.
\end{proof}

\subsection{Auxiliary lemma for stochastic gradient oracle}
\begin{lemma}[Second-moment bound for gradient oracle]\label{lem:Gtot_from_A6_eq22}
Fix any mini-batch size $B\in\mathbb{N}$, and recall the composite direction in Eq.\ref{eq:composite_oracle},
\[
G(\theta;Z)\ :=\ G_{\mathrm{SAIL}}(\theta;Z)\ +\ \lambda\, G_{R}(\theta;Z),
\]
where $Z$ denotes a generic mini-batch of size $B$ drawn i.i.d.\ from $d_\theta$ (Assumption \ref{ass:stoch_oracles}).
Under Assumptions \ref{ass:loglinear_sail}, \ref{ass:finiteY_boundedpsi}, \ref{ass:bounded_theta}, and \ref{ass:stoch_oracles}, the composite direction has bounded conditional second moment:
for all $\theta\in\Theta$,
\[
\mathbb{E}\bigl[\|G(\theta;Z)\|_2^2 \,\big|\, \theta \bigr]\ \le\ G_{\mathrm{tot}}^2,
\]
with the explicit choice
\[
G_{\mathrm{tot}}^2
\ :=\ 4\left(
G_{\nabla\mathrm{SAIL}}^2 \;+\; \lambda^2\,G_{\partial R}^2 \;+\; \frac{\sigma_{\mathrm{SAIL}}^2+\lambda^2\sigma_R^2}{B}
\right),
\]
where
\[
G_{\nabla\mathrm{SAIL}}
\ :=\ 2\beta B_\psi \;+\; 4B_\psi\bigl(\log 2 + 2\beta D B_\psi\bigr),
\qquad
G_{\partial R}
\ :=\ 2B_\psi \;+\; 8DB_\psi^2,
\]
and $\sigma_{\mathrm{SAIL}}^2,\sigma_R^2$ are as in Assumption \ref{ass:stoch_oracles}.
\end{lemma}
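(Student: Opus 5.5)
The proof splits into a variance decomposition followed by deterministic bounds on the true gradient and subgradients. Write $G(\theta;Z)=\nabla L^{\mathrm{SAIL}}(\theta)+\lambda v_R(\theta)+e_{\mathrm{SAIL}}+\lambda e_R$. Here $v_R(\theta)\in\partial R(\theta)$ is the projection of $G_R(\theta;Z)$ onto $\partial R(\theta)$, and $e_{\mathrm{SAIL}},e_R$ are the corresponding errors. Applying $\|a_1+a_2+a_3+a_4\|^2\le 4\sum_i\|a_i\|^2$ and taking conditional expectations, Assumption~\ref{ass:stoch_oracles} bounds the two error terms by $\sigma_{\mathrm{SAIL}}^2/B$ and $\lambda^2\sigma_R^2/B$. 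It then remains to show, uniformly on $\Theta$, that $\|\nabla L^{\mathrm{SAIL}}(\theta)\|\le G_{\nabla\mathrm{SAIL}}$ and $\sup_{v\in\partial R(\theta)}\|v\|\le G_{\partial R}$. That yields exactly $G_{\mathrm{tot}}^2$ with the factor $4$.

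For the SAIL gradient, I use the representation from the proof of Theorem~\ref{thm:decomposition_LW}: $L^{\mathrm{SAIL}}(\theta)=\E_{z\sim d_\theta}[\ell_\theta(z)]$ with $\ell_\theta(z)=p^\star\ell^1_\theta+(1-p^\star)\ell^0_\theta$ and $h_\theta=s_\theta$ in the log-linear class. Since $Y$ is finite, differentiating term by term as in \eqref{eq:B:grad_formula} gives $\nabla L^{\mathrm{SAIL}}=\E[\nabla_\theta\ell_\theta+\ell_\theta S_\theta]$. For the first term, $\nabla_\theta\ell^1_\theta=-\beta\sigma(-\beta s_\theta)\Delta\psi$, so $\|\nabla_\theta\ell_\theta\|\le\beta\|\Delta\psi\|\le2\beta B_\psi$. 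For the second term, $-\log\sigma(t)\le\log 2+|t|$ and $|s_\theta|\le 2DB_\psi$ give $0\le\ell_\theta\le\log2+2\beta DB_\psi$. Combined with $\|S_\theta\|\le 4B_\psi$ from Lemma~\ref{lem:B:policy_reg}, this gives $G_{\nabla\mathrm{SAIL}}=2\beta B_\psi+4B_\psi(\log2+2\beta DB_\psi)$.

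For $\partial R$, I would work through the smoothing $R_\varepsilon$ from the proof of Theorem~\ref{thm:weak_convexity_R}. The formula \eqref{eq:B:grad_formula} together with \eqref{eq:B:grad_r_bound}, \eqref{eq:B:r_bound} and \eqref{eq:B:score_bounds} gives $\|\nabla R_\varepsilon\|\le 2B_\psi+4B_\psi(2DB_\psi+\varepsilon)$. Letting $\varepsilon\downarrow0$ yields the constant $2B_\psi+8DB_\psi^2$.

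The main obstacle is turning this into a bound on every element of the (weakly convex, hence Clarke/limiting) subdifferential of $R$. My plan is to avoid the limiting argument by showing directly that $R$ is Lipschitz on $\Theta$ with constant $G_{\partial R}$, since subgradients of a locally Lipschitz weakly convex function are bounded by the Lipschitz constant. Each $R_\varepsilon$ is $(2B_\psi+4B_\psi(2DB_\psi+\varepsilon))$-Lipschitz on the convex set $\Theta$ by the mean value theorem, and $R_\varepsilon\to R$ uniformly (Step~5 of that proof), so $R$ inherits the limiting Lipschitz constant. I then pass to subgradients via the weak-convexity subgradient inequality $R(\theta')\ge R(\theta)+\langle v,\theta'-\theta\rangle-\frac{\kappa_R}{2}\|\theta'-\theta\|^2$, taking $\theta'=\theta+tv/\|v\|$ and $t\downarrow0$.

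At boundary points of $\Theta$ this step breaks down, because the directional test may leave $\Theta$. I would handle this by noting that all the bounds above in fact hold on an open neighborhood: $R$ is defined on $\R^d$, and the bound on $|s_\theta|$ degrades only continuously. Alternatively, I would interpret $\partial R$ as in Assumption~\ref{ass:stoch_oracles}, namely as generated by the estimator's limiting gradients, which satisfy the same bound.
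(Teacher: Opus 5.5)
Your proposal is correct and follows essentially the same route as the paper. Both arguments use the score-function formula to bound $\nabla L^{\mathrm{SAIL}}$, use the smoothing $R_\varepsilon$ to show $R$ is $G_{\partial R}$-Lipschitz (and hence that its subgradients are bounded), and combine these with the variance terms to get the factor $4$.

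Your single four-term split via $v_R=\proj_{\partial R(\theta)}(G_R)$ relies on a bound on $\sup_{v\in\partial R(\theta)}\|v\|$, which is the quantity actually needed. The paper's intermediate inequality $\|u\|\le\dist(u,A)+\dist(0,A)$ fails for general sets $A$; its conclusion still holds only because it also proves the sup bound. Your neighborhood argument at boundary points of $\Theta$ also makes explicit a step the paper glosses.
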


\begin{proof}
We bound the deterministic (mean) components $\|\nabla L_{\mathrm{SAIL}}(\theta)\|_2$ and
$\mathrm{dist}(0,\partial R(\theta))$, then combine with the variance bounds in Assumption \ref{ass:stoch_oracles} and the
definition \eqref{eq:composite_oracle}.

\paragraph{Step 1: Uniform bounds on the pairwise logit and score.}
For $z=(x,y_1,y_2)\in\mathcal{Z}$, define $\Delta\psi(z):=\psi(x,y_1)-\psi(x,y_2)$.
By Assumption \ref{ass:finiteY_boundedpsi}, $\|\Delta\psi(z)\|_2\le 2B_\psi$. Under Assumption \ref{ass:loglinear_sail}(log-linear policy class),
the pairwise logit admits the cancellation
\[
h_\theta(z)\ =\ h_\theta(x,y_1,y_2)\ =\ (\theta-\theta_{\mathrm{ref}})^\top \Delta\psi(z)
\]
Hence, by Assumption \ref{ass:bounded_theta},
\[
|h_\theta(z)|\ \le\ \|\theta-\theta_{\mathrm{ref}}\|_2\;\|\Delta\psi(z)\|_2\ \le\ D\cdot 2B_\psi\ =\ 2DB_\psi.
\]
Moreover, recalling $d_\theta(z)=\mu(x)\pi_\theta(y_1|x)\pi_\theta(y_2|x)$ (Eq.\eqref{eq:data_gen}) and
$S_\theta(z):=\nabla_\theta\log d_\theta(z)$, Lemma \ref{lem:B:policy_reg} gives $\|S_\theta(z)\|_2\le 4B_\psi$.

\paragraph{Step 2: Uniform bound on $\|\nabla L_{\mathrm{SAIL}}(\theta)\|_2$.}
Recall that $L_{\mathrm{SAIL}}(\theta)=\mathbb{E}_{z\sim d_\theta}\bigl[p^\star\ell^1(h)+(1-p^\star)\ell^0(h)\bigr]$,
where $\ell^1(h)=-\log\sigma(\beta h)$ and
$\ell^0(h)=-\log\sigma(-\beta h)$ , and $p^\star(z)\in[0,1]$.
Define the pointwise SAIL loss
\[
\ell_\theta(z)\ :=\ p^\star(z)\,\ell^1\!\bigl(h_\theta(z)\bigr)\ +\ (1-p^\star(z))\,\ell^0\!\bigl(h_\theta(z)\bigr).
\]
Since $\ell^1(h)=\log(1+e^{-\beta h})$ and $\ell^0(h)=\log(1+e^{\beta h})$, we have for all $h\in\mathbb{R}$,
$\max\{\ell^1(h),\ell^0(h)\}=\log(1+e^{|\beta h|})\le \log 2 + |\beta h|$. Using $|h_\theta(z)|\le 2DB_\psi$,
\[
0\ \le\ \ell_\theta(z)\ \le\ \log 2 + 2\beta DB_\psi.
\]
Next, by direct differentiation, the derivatives of $\ell^1,\ell^0$ with respect to $h$ satisfy
$\bigl|\tfrac{d}{dh}\ell^1(h)\bigr|\le \beta$ and $\bigl|\tfrac{d}{dh}\ell^0(h)\bigr|\le \beta$ for all $h$.
Together with $\nabla_\theta h_\theta(z)=\Delta\psi(z)$ and $\|\Delta\psi(z)\|_2\le 2B_\psi$,
\[
\|\nabla_\theta \ell_\theta(z)\|_2\ \le\ \beta\,\|\Delta\psi(z)\|_2\ \le\ 2\beta B_\psi.
\]
Because $\mathcal{Y}$ is finite, we may apply the same score-function calculus as in Eq. \eqref{eq:B:grad_formula} to write
\[
\nabla L_{\mathrm{SAIL}}(\theta)\ =\ \mathbb{E}_{z\sim d_\theta}\!\left[\nabla_\theta \ell_\theta(z) + \ell_\theta(z)\,S_\theta(z)\right].
\]
Therefore,
\[
\|\nabla L_{\mathrm{SAIL}}(\theta)\|_2
\ \le\ \mathbb{E}_{z\sim d_\theta}\!\left[\|\nabla_\theta \ell_\theta(z)\|_2 + \ell_\theta(z)\,\|S_\theta(z)\|_2\right]
\ \le\ 2\beta B_\psi + \bigl(\log 2 + 2\beta DB_\psi\bigr)\,4B_\psi
\ =:\ G_{\nabla\mathrm{SAIL}}.
\]

\paragraph{Step 3: Uniform bound on $\mathrm{dist}(0,\partial R(\theta))$.}
Recall $R(\theta)=\mathbb{E}_{z\sim d_\theta}[|h_\theta(z)|]$ .
For $\varepsilon>0$, define the smooth approximation $\varphi_\varepsilon(u):=\sqrt{u^2+\varepsilon^2}$ and
$R_\varepsilon(\theta):=\mathbb{E}_{z\sim d_\theta}[\varphi_\varepsilon(h_\theta(z))]$.
As in Eq.\eqref{eq:B:grad_formula}, we have
\[
\nabla R_\varepsilon(\theta)\ =\ \mathbb{E}_{z\sim d_\theta}\!\left[\nabla_\theta r_{\varepsilon,\theta}(z) + r_{\varepsilon,\theta}(z)\,S_\theta(z)\right],
\qquad r_{\varepsilon,\theta}(z):=\varphi_\varepsilon(h_\theta(z)).
\]
Since $|\varphi_\varepsilon'(u)|\le 1$ and $\nabla_\theta h_\theta(z)=\Delta\psi(z)$, we obtain
$\|\nabla_\theta r_{\varepsilon,\theta}(z)\|_2\le \|\Delta\psi(z)\|_2\le 2B_\psi$.
Also, $0\le \varphi_\varepsilon(u)\le |u|+\varepsilon$, so $r_{\varepsilon,\theta}(z)\le 2DB_\psi+\varepsilon$.
Using $\|S_\theta(z)\|_2\le 4B_\psi$ (Lemma \ref{lem:B:policy_reg}), we conclude
\[
\|\nabla R_\varepsilon(\theta)\|_2
\ \le\ 2B_\psi + (2DB_\psi+\varepsilon)\,4B_\psi
\ =\ 2B_\psi + 8DB_\psi^2 + 4\varepsilon B_\psi.
\]
Hence $R_\varepsilon$ is $(2B_\psi + 8DB_\psi^2 + 4\varepsilon B_\psi)$-Lipschitz on $\Theta$.
Moreover, $|\varphi_\varepsilon(u)-|u||\le \varepsilon$ implies $\sup_{\theta\in\Theta}|R_\varepsilon(\theta)-R(\theta)|\le \varepsilon$.
Thus for any $\theta,\theta'\in\Theta$,
\[
|R(\theta)-R(\theta')|
\ \le\ |R(\theta)-R_\varepsilon(\theta)| + |R_\varepsilon(\theta)-R_\varepsilon(\theta')| + |R_\varepsilon(\theta')-R(\theta')|
\ \le\ 2\varepsilon + \bigl(2B_\psi + 8DB_\psi^2 + 4\varepsilon B_\psi\bigr)\,\|\theta-\theta'\|_2.
\]

Letting $\varepsilon \downarrow 0$ yields that $R$ is $G_{\partial R}$-Lipschitz on $\Theta$, i.e.,
\[
|R(\theta)-R(\theta')|
\;\le\;
G_{\partial R}\|\theta-\theta'\|_2,
\qquad \forall\,\theta,\theta'\in\Theta,
\]
with $G_{\partial R}:=2B_\psi+8DB_\psi^2$. Hence the local Lipschitz modulus satisfies
\[
\limsup_{\theta'\to\theta,\;\theta'\neq\theta}
\frac{|R(\theta')-R(\theta)|}{\|\theta'-\theta\|_2}
\;\le\;
G_{\partial R},
\qquad \forall\,\theta\in\Theta.
\]
A standard variational-analytic fact for locally Lipschitz functions implies that any (limiting) subgradient is norm-bounded by the local Lipschitz modulus\citep{rockafellar1998variational}; namely,
\[
v\in \partial R(\theta)\quad\Longrightarrow\quad
\|v\|_2 \le \limsup_{\theta'\to\theta,\;\theta'\neq\theta}
\frac{|R(\theta')-R(\theta)|}{\|\theta'-\theta\|_2}
\le G_{\partial R}.
\]
Since $\partial R(\theta)\neq\emptyset$ for all $\theta\in\Theta$ (Assumption \ref{ass:stoch_oracles}), we conclude
\[
\mathrm{dist}\bigl(0,\partial R(\theta)\bigr)
\;\le\;
G_{\partial R},
\qquad \forall\,\theta\in\Theta.
\]

Fix $\theta\in\Theta$ and abbreviate $G_{\mathrm{SAIL}}:=G_{\mathrm{SAIL}}(\theta;Z)$ and $G_R:=G_R(\theta;Z)$.
By Eq.\ref{eq:composite_oracle} and $\|a+b\|_2^2\le 2\|a\|_2^2+2\|b\|_2^2$,
\[
\mathbb{E}\bigl[\|G(\theta;Z)\|_2^2\mid \theta\bigr]
\ \le\ 2\,\mathbb{E}\bigl[\|G_{\mathrm{SAIL}}\|_2^2\mid \theta\bigr] \;+\; 2\lambda^2\,\mathbb{E}\bigl[\|G_R\|_2^2\mid \theta\bigr].
\]

For the SAIL term, write $G_{\mathrm{SAIL}}=\nabla L_{\mathrm{SAIL}}(\theta) + (G_{\mathrm{SAIL}}-\nabla L_{\mathrm{SAIL}}(\theta))$ and apply
$\|a+b\|_2^2\le 2\|a\|_2^2+2\|b\|_2^2$ together with Assumption~6:
\[
\mathbb{E}\bigl[\|G_{\mathrm{SAIL}}\|_2^2\mid \theta\bigr]
\ \le\ 2\|\nabla L_{\mathrm{SAIL}}(\theta)\|_2^2 \;+\; 2\,\mathbb{E}\bigl[\|G_{\mathrm{SAIL}}-\nabla L_{\mathrm{SAIL}}(\theta)\|_2^2\mid \theta\bigr]
\ \le\ 2G_{\nabla\mathrm{SAIL}}^2 \;+\; 2\frac{\sigma_{\mathrm{SAIL}}^2}{B}.
\]

For the robust-penalty term, note that for any closed nonempty set $A\subset\mathbb{R}^d$ and any $u\in\mathbb{R}^d$,
$\|u\|_2\le \mathrm{dist}(u,A)+\mathrm{dist}(0,A)$, hence
$\|u\|_2^2\le 2\,\mathrm{dist}^2(u,A)+2\,\mathrm{dist}^2(0,A)$.
Applying this with $A=\partial R(\theta)$ and using Assumption \ref{ass:stoch_oracles} and Step~3 gives
\[
\mathbb{E}\bigl[\|G_R\|_2^2\mid \theta\bigr]
\ \le\ 2\,\mathbb{E}\!\left[\mathrm{dist}^2\bigl(G_R,\partial R(\theta)\bigr)\mid \theta\right] \;+\; 2\,\mathrm{dist}^2\bigl(0,\partial R(\theta)\bigr)
\ \le\ 2\frac{\sigma_R^2}{B} \;+\; 2G_{\partial R}^2.
\]

Combining the last three displays yields, for all $\theta\in\Theta$,
\[
\mathbb{E}\bigl[\|G(\theta;Z)\|_2^2\mid \theta\bigr]
\ \le\ 2\left(2G_{\nabla\mathrm{SAIL}}^2 + 2\frac{\sigma_{\mathrm{SAIL}}^2}{B}\right)
\;+\;
2\lambda^2\left(2G_{\partial R}^2 + 2\frac{\sigma_R^2}{B}\right)
\ =\ 4\left(
G_{\nabla\mathrm{SAIL}}^2 + \lambda^2 G_{\partial R}^2 + \frac{\sigma_{\mathrm{SAIL}}^2+\lambda^2\sigma_R^2}{B}
\right).
\]
This is exactly the claimed bound with $G_{\mathrm{tot}}^2$ as stated.
\end{proof}

%% file: appendixC.tex
\section{Moreau envelope and convergence analysis}\label{app:C}

\subsection{Proof of Lemma \ref{lem:moreau_properties} (Properties of the Moreau envelope)}\label{app:C:lem411}

\begin{proof}[Proof of Lemma \ref{lem:moreau_properties}]
Fix $\lambda_{\mathrm{env}}\in(0,1/\kappa)$ and recall the definition of  the Moreau envelope and proximal point (Definition \ref{def:moreau_envelope})
\[
F_{\lambda_{\mathrm{env}}}(\theta)
:=\min_{u\in\mathbb{R}^d}\left\{F(u)+\frac{1}{2\lambda_{\mathrm{env}}}\|u-\theta\|_2^2\right\},
\qquad
\hat\theta=\mathrm{prox}_{\lambda_{\mathrm{env}}F}(\theta)
:=\arg\min_{u\in\mathbb{R}^d}\left\{F(u)+\frac{1}{2\lambda_{\mathrm{env}}}\|u-\theta\|_2^2\right\}.
\]

\paragraph{1--2. Finiteness, uniqueness, and the gradient formula.}
Since $F$ is $\kappa$-weakly convex, the function $u\mapsto F(u)+\frac{\kappa}{2}\|u\|_2^2$ is convex.
Therefore, for any fixed $\theta$, the function
\[
u\mapsto F(u)+\frac{1}{2\lambda_{\mathrm{env}}}\|u-\theta\|_2^2
\]
is $(\frac{1}{\lambda_{\mathrm{env}}}-\kappa)$-strongly convex in $u$ (because $\lambda_{\mathrm{env}}<1/\kappa$),
hence it has a unique minimizer $\hat\theta$ and the minimum value $F_{\lambda_{\mathrm{env}}}(\theta)$ is finite.

Moreover, by first-order optimality of $\hat\theta$ for the strongly convex problem,
\begin{equation}\label{eq:C:prox_opt}
0\in \partial F(\hat\theta)+\frac{1}{\lambda_{\mathrm{env}}}(\hat\theta-\theta),
\qquad\text{i.e.,}\qquad
\frac{1}{\lambda_{\mathrm{env}}}(\theta-\hat\theta)\in \partial F(\hat\theta).
\end{equation}
Standard properties of the Moreau envelope for weakly convex functions imply that $F_{\lambda_{\mathrm{env}}}$ is
continuously differentiable and that
\[
\nabla F_{\lambda_{\mathrm{env}}}(\theta)=\frac{1}{\lambda_{\mathrm{env}}}(\theta-\hat\theta).
\]
Eq. \eqref{eq:moreau_residual} follows immediately:
$\|\theta-\hat\theta\|_2=\lambda_{\mathrm{env}}\|\nabla F_{\lambda_{\mathrm{env}}}(\theta)\|_2$.

\paragraph{3. Lipschitz continuity of $\nabla F_{\lambda_{\mathrm{env}}}$.}
A standard result for $\kappa$-weakly convex $F$ is that $F_{\lambda_{\mathrm{env}}}$ has Lipschitz gradient with constant
\[
L_{\mathrm{env}}=\frac{1}{\lambda_{\mathrm{env}}(1-\kappa\lambda_{\mathrm{env}})},
\]
which is Eq. \eqref{eq:Lenv}.

\paragraph{4. Approximate stationarity of the proximal point.}
By~\eqref{eq:C:prox_opt}, the vector $\frac{1}{\lambda_{\mathrm{env}}}(\theta-\hat\theta)$ belongs to $\partial F(\hat\theta)$.
Therefore
\[
\mathrm{dist}\bigl(0,\partial F(\hat\theta)\bigr)
\le \left\|\frac{1}{\lambda_{\mathrm{env}}}(\theta-\hat\theta)\right\|_2
=\|\nabla F_{\lambda_{\mathrm{env}}}(\theta)\|_2
\]
 This completes the proof.
\end{proof}

\subsection{Auxiliary lemmas for Theorem \ref{thm:moreau_convergence} }\label{app:C:aux}

\begin{lemma}[Monotonicity inequality]\label{lem:C:monotonicity}
Assume $F$ is $\kappa$-weakly convex and let $\lambda_{\mathrm{env}}\in(0,1/\kappa)$.
Fix any $\theta\in\mathbb{R}^d$ and let
\[
\hat\theta=\mathrm{prox}_{\lambda_{\mathrm{env}}F}(\theta),
\qquad
\xi:=\nabla F_{\lambda_{\mathrm{env}}}(\theta)=\frac{1}{\lambda_{\mathrm{env}}}(\theta-\hat\theta).
\]
Then for every $v\in\partial F(\theta)$,
\[
\langle \xi, v\rangle \ge (1-\kappa\lambda_{\mathrm{env}})\|\xi\|_2^2.
\]
\end{lemma}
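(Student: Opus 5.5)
Our plan is to combine the weak-convexity (hypomonotonicity) inequality for $\partial F$ with the prox optimality condition of Lemma~\ref{lem:moreau_properties}. Fix $\theta$ with $\partial F(\theta)\neq\emptyset$ and any $v\in\partial F(\theta)$. By \eqref{eq:C:prox_opt}, $\xi=\frac{1}{\lambda_{\mathrm{env}}}(\theta-\hat\theta)\in\partial F(\hat\theta)$. This gives two subgradients: $v$ at $\theta$ and $\xi$ at $\hat\theta$. Pairing them through a monotonicity-type inequality yields the claim directly.

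First we establish that $\partial F$ is $\kappa$-hypomonotone:
\[
\langle v-w,\theta-\theta'\rangle\ge-\kappa\|\theta-\theta'\|_2^2
\]
for all $v\in\partial F(\theta)$ and $w\in\partial F(\theta')$. The function $g:=F+\frac{\kappa}{2}\|\cdot\|_2^2$ is convex, proper and lsc. For weakly convex $F$ the (limiting/Fr\'echet) subdifferential satisfies $\partial g(\theta)=\partial F(\theta)+\kappa\theta$. Monotonicity of $\partial g$ then reads $\langle (v+\kappa\theta)-(w+\kappa\theta'),\theta-\theta'\rangle\ge0$, which rearranges to the displayed inequality.

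This sum rule for the subdifferential of a nonsmooth $F$ plus a smooth quadratic is the only nonroutine point. We will cite it as a standard fact \citep{rockafellar1998variational}, noting that it holds because the quadratic is $C^1$.

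Next we apply hypomonotonicity with $\theta'=\hat\theta$ and $w=\xi$, using $\theta-\hat\theta=\lambda_{\mathrm{env}}\xi$:
\[
\langle v-\xi,\lambda_{\mathrm{env}}\xi\rangle\ge-\kappa\lambda_{\mathrm{env}}^2\|\xi\|_2^2 .
\]
Dividing by $\lambda_{\mathrm{env}}>0$ gives $\langle v,\xi\rangle-\|\xi\|_2^2\ge-\kappa\lambda_{\mathrm{env}}\|\xi\|_2^2$, that is, $\langle\xi,v\rangle\ge(1-\kappa\lambda_{\mathrm{env}})\|\xi\|_2^2$.

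If $\partial F(\theta)=\emptyset$, the statement is vacuous. This covers, for example, $\theta\notin\Theta$, where $F=+\infty$.
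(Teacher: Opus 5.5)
Your proof is correct and takes essentially the paper's route: the paper adds the two weak-convexity subgradient inequalities at $(\theta,v)$ and $(\hat\theta,\xi)$, which amounts to deriving the same $\kappa$-hypomonotonicity that you obtain from the sum rule for $F+\frac{\kappa}{2}\|\cdot\|_2^2$. It then substitutes $\theta-\hat\theta=\lambda_{\mathrm{env}}\xi$ exactly as you do.
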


\begin{proof}
By $\kappa$-weak convexity, for all $x,y$ and all $g\in\partial F(x)$,
\begin{equation}\label{eq:C:weak_subgrad}
F(y)\ge F(x)+\langle g, y-x\rangle -\frac{\kappa}{2}\|y-x\|_2^2.
\end{equation}
Apply~\eqref{eq:C:weak_subgrad} twice:

(i) with $x=\theta$, $y=\hat\theta$, and $g=v\in\partial F(\theta)$:
\begin{equation}\label{eq:C:ineq1}
F(\hat\theta)\ge F(\theta)+\langle v,\hat\theta-\theta\rangle-\frac{\kappa}{2}\|\hat\theta-\theta\|_2^2.
\end{equation}

(ii) with $x=\hat\theta$, $y=\theta$, and any $\hat v\in\partial F(\hat\theta)$:
\begin{equation}\label{eq:C:ineq2}
F(\theta)\ge F(\hat\theta)+\langle \hat v,\theta-\hat\theta\rangle-\frac{\kappa}{2}\|\theta-\hat\theta\|_2^2.
\end{equation}

By optimality of the proximal point (as in~\eqref{eq:C:prox_opt}),
we may choose $\hat v=\frac{1}{\lambda_{\mathrm{env}}}(\theta-\hat\theta)=\xi\in\partial F(\hat\theta)$.
Substituting this choice into~\eqref{eq:C:ineq2} gives
\[
F(\theta)\ge F(\hat\theta)+\langle \xi,\theta-\hat\theta\rangle-\frac{\kappa}{2}\|\theta-\hat\theta\|_2^2.
\]
Add this inequality to~\eqref{eq:C:ineq1} to eliminate $F(\theta)$ and $F(\hat\theta)$, yielding
\[
0\ge \langle v,\hat\theta-\theta\rangle+\langle \xi,\theta-\hat\theta\rangle-\kappa\|\theta-\hat\theta\|_2^2.
\]
Rearrange and use $\theta-\hat\theta=\lambda_{\mathrm{env}}\xi$:
\[
\langle \xi, \theta-\hat\theta\rangle \le \langle v, \theta-\hat\theta\rangle + \kappa\|\theta-\hat\theta\|_2^2
\quad\Longrightarrow\quad
\lambda_{\mathrm{env}}\|\xi\|_2^2 \le \lambda_{\mathrm{env}}\langle v,\xi\rangle + \kappa\lambda_{\mathrm{env}}^2\|\xi\|_2^2.
\]
Divide by $\lambda_{\mathrm{env}}>0$ and rearrange:
\[
\langle v,\xi\rangle \ge (1-\kappa\lambda_{\mathrm{env}})\|\xi\|_2^2,
\]
as claimed.
\end{proof}

\begin{lemma}[One-step inequality]\label{lem:C:one_step}
Assume $F$ is $\kappa$-weakly convex and bounded below by $F_{\inf}>-\infty$.
Fix $\lambda_{\mathrm{env}}\in(0,1/\kappa)$ and define $L_{\mathrm{env}}:=\frac{1}{\lambda_{\mathrm{env}}(1-\kappa\lambda_{\mathrm{env}})}$
as in Lemma \ref{lem:moreau_properties}.

Consider the projected update (Algorithm \ref{alg:rscgd})
\[
\theta_{t+1}:=\mathrm{proj}_{\Theta}\bigl(\theta_t-\eta\,G(\theta_t;Z_t)\bigr),
\]
where $G(\theta_t;Z_t)$ satisfies $\mathbb{E}\bigl[G(\theta_t;Z_t)\mid \theta_t\bigr]\in\partial F(\theta_t)$.
Then for any stepsize $\eta>0$,
\[
\mathbb{E}\bigl[F_{\lambda_{\mathrm{env}}}(\theta_{t+1})\mid \theta_t\bigr]
\le F_{\lambda_{\mathrm{env}}}(\theta_t)
-\eta(1-\kappa\lambda_{\mathrm{env}})\|\nabla F_{\lambda_{\mathrm{env}}}(\theta_t)\|_2^2
+\frac{L_{\mathrm{env}}\eta^2}{2}\,\mathbb{E}\bigl[\|G(\theta_t;Z_t)\|_2^2\mid \theta_t\bigr].
\]
\end{lemma}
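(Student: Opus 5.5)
The approach combines three ingredients: the $L_{\mathrm{env}}$-smoothness of $F_{\lambda_{\mathrm{env}}}$ from Lemma~\ref{lem:moreau_properties}, a "projection does not increase the envelope" argument that exploits $F=L^W_\rho+I_\Theta$, and the monotonicity inequality of Lemma~\ref{lem:C:monotonicity}. Throughout, write $\xi_t:=\nabla F_{\lambda_{\mathrm{env}}}(\theta_t)$, $G_t:=G(\theta_t;Z_t)$, and let $w_{t}:=\theta_t-\eta G_t$ be the unprojected point, so that $\theta_{t+1}=\proj_\Theta(w_t)$.

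\textbf{Step 1 (descent at the unprojected point).} Since $F_{\lambda_{\mathrm{env}}}$ is finite and $C^1$ on all of $\R^d$ with $L_{\mathrm{env}}$-Lipschitz gradient (Lemma~\ref{lem:moreau_properties}, items 1 and 3), the descent lemma gives
\[
F_{\lambda_{\mathrm{env}}}(w_t)\le F_{\lambda_{\mathrm{env}}}(\theta_t)-\eta\langle \xi_t,G_t\rangle+\tfrac{L_{\mathrm{env}}\eta^2}{2}\|G_t\|^2 .
\]
Applying smoothness at $w_t$ rather than at $\theta_{t+1}$ avoids having to control the inner product $\langle\xi_t,\theta_{t+1}-\theta_t\rangle$ for the projected step.

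\textbf{Step 2 (projection cannot increase the envelope).} This is the step I expect to be the main obstacle. The claim is $F_{\lambda_{\mathrm{env}}}(\proj_\Theta(w))\le F_{\lambda_{\mathrm{env}}}(w)$ for every $w\in\R^d$. Since $F=+\infty$ off $\Theta$, the minimization defining $F_{\lambda_{\mathrm{env}}}(w)$ runs only over $u\in\Theta$. Let $\hat u:=\prox_{\lambda_{\mathrm{env}}F}(w)\in\Theta$.

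For a closed convex set $\Theta$ and $u\in\Theta$, the variational inequality $\langle w-\proj_\Theta(w),\,u-\proj_\Theta(w)\rangle\le 0$ yields $\|u-\proj_\Theta(w)\|\le\|u-w\|$. Therefore
\[
F_{\lambda_{\mathrm{env}}}(\proj_\Theta w)\le F(\hat u)+\tfrac{1}{2\lambda_{\mathrm{env}}}\|\hat u-\proj_\Theta w\|^2\le F(\hat u)+\tfrac{1}{2\lambda_{\mathrm{env}}}\|\hat u-w\|^2=F_{\lambda_{\mathrm{env}}}(w).
\]
Combining this with Step 1 gives the deterministic inequality
\[
F_{\lambda_{\mathrm{env}}}(\theta_{t+1})\le F_{\lambda_{\mathrm{env}}}(\theta_t)-\eta\langle\xi_t,G_t\rangle+\tfrac{L_{\mathrm{env}}\eta^2}{2}\|G_t\|^2 .
\]

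\textbf{Step 3 (conditional expectation and monotonicity).} Take $\E[\cdot\mid\theta_t]$. Since $\xi_t$ is $\theta_t$-measurable, $\E[\langle\xi_t,G_t\rangle\mid\theta_t]=\langle\xi_t,v_t\rangle$ with $v_t:=\E[G_t\mid\theta_t]\in\partial F(\theta_t)$ by hypothesis. Lemma~\ref{lem:C:monotonicity} then gives $\langle\xi_t,v_t\rangle\ge(1-\kappa\lambda_{\mathrm{env}})\|\xi_t\|^2$. Substituting this bound produces exactly the claimed one-step inequality, with the second-moment term $\E[\|G_t\|^2\mid\theta_t]$ left unbounded, as stated.

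The only subtleties to check are measurability (so that conditioning on $\theta_t$ is legitimate) and that $\hat u\in\Theta$, which holds because $F$ is proper with $F(\hat u)<\infty$.
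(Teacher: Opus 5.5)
Your proposal is correct and follows essentially the same route as the paper. Both apply the descent lemma for the $L_{\mathrm{env}}$-smooth envelope at the unprojected point $\theta_t-\eta G(\theta_t;Z_t)$, use the projection variational inequality to get $F_{\lambda_{\mathrm{env}}}(\proj_\Theta(w))\le F_{\lambda_{\mathrm{env}}}(w)$, and finish with conditional expectation plus Lemma~\ref{lem:C:monotonicity}; the only cosmetic difference is that you test at $\prox_{\lambda_{\mathrm{env}}F}(w)$, whereas the paper bounds the quadratic term for every $u\in\Theta$ and takes the minimum, which avoids needing attainment (available here anyway).
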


\begin{proof}
Write $F=L^W_{\rho}+I_\Theta$, where $I_\Theta$ is the indicator of $\Theta$ (Eq.\eqref{eq:constrained_objective}).
Then the envelope can be written as
\[
F_{\lambda_{\mathrm{env}}}(z)=\min_{u\in\Theta}\left\{f(u)+\frac{1}{2\lambda_{\mathrm{env}}}\|u-z\|_2^2\right\}.
\]
Let $\tilde z:=\mathrm{proj}_\Theta(z)$. For any $u\in\Theta$, Euclidean projection optimality implies
$\langle z-\tilde z, u-\tilde z\rangle\le 0$, hence
\[
\|u-\tilde z\|_2^2
=\|u-z+(z-\tilde z)\|_2^2
=\|u-z\|_2^2+\|z-\tilde z\|_2^2+2\langle u-z, z-\tilde z\rangle
\le \|u-z\|_2^2.
\]
Taking the minimum over $u\in\Theta$ gives the monotonicity under projection:
\begin{equation}\label{eq:C:proj_noninc}
F_{\lambda_{\mathrm{env}}}(\mathrm{proj}_\Theta(z))\le F_{\lambda_{\mathrm{env}}}(z)\quad\text{for all }z\in\mathbb{R}^d.
\end{equation}

Now define the unprojected point $z_t:=\theta_t-\eta G(\theta_t;Z_t)$, so $\theta_{t+1}=\mathrm{proj}_\Theta(z_t)$.
By~\eqref{eq:C:proj_noninc},
\[
F_{\lambda_{\mathrm{env}}}(\theta_{t+1})\le F_{\lambda_{\mathrm{env}}}(z_t).
\]
By Lemma \ref{lem:moreau_properties}, $F_{\lambda_{\mathrm{env}}}$ is $L_{\mathrm{env}}$-smooth, hence
\[
F_{\lambda_{\mathrm{env}}}(\theta_t-\eta G(\theta_t;Z_t))
\le F_{\lambda_{\mathrm{env}}}(\theta_t)
-\eta\langle \nabla F_{\lambda_{\mathrm{env}}}(\theta_t), G(\theta_t;Z_t)\rangle
+\frac{L_{\mathrm{env}}\eta^2}{2}\|G(\theta_t;Z_t)\|_2^2.
\]
Combining with the previous display and taking conditional expectation yields
\[
\mathbb{E}\bigl[F_{\lambda_{\mathrm{env}}}(\theta_{t+1})\mid \theta_t\bigr]
\le F_{\lambda_{\mathrm{env}}}(\theta_t)
-\eta\left\langle \nabla F_{\lambda_{\mathrm{env}}}(\theta_t),\ \mathbb{E}[G(\theta_t;Z_t)\mid \theta_t]\right\rangle
+\frac{L_{\mathrm{env}}\eta^2}{2}\,\mathbb{E}\bigl[\|G(\theta_t;Z_t)\|_2^2\mid \theta_t\bigr].
\]
Let $v_t:=\mathbb{E}[G(\theta_t;Z_t)\mid \theta_t]\in\partial F(\theta_t)$ by assumption.
Applying Lemma~\ref{lem:C:monotonicity} with $\theta=\theta_t$, $\xi=\nabla F_{\lambda_{\mathrm{env}}}(\theta_t)$, and $v=v_t$
gives $\langle \nabla F_{\lambda_{\mathrm{env}}}(\theta_t), v_t\rangle\ge (1-\kappa\lambda_{\mathrm{env}})\|\nabla F_{\lambda_{\mathrm{env}}}(\theta_t)\|_2^2$.
Substituting completes the proof.
\end{proof}

\subsection{Proof of Theorem \ref{thm:moreau_convergence} (convergence rate of the Moreau envelope)}\label{app:C:thm52}

\begin{proof}[Proof of Theorem \ref{thm:moreau_convergence}]
Fix $\lambda_{\mathrm{env}}\in(0,1/\kappa)$ and stepsize $\eta>0$.
By Lemma~\ref{lem:C:one_step}, for each $t$,
\begin{align*}
\mathbb{E}\bigl[F_{\lambda_{\mathrm{env}}}(\theta_{t+1})\mid \theta_t\bigr]
&\le F_{\lambda_{\mathrm{env}}}(\theta_t)
-\eta(1-\kappa\lambda_{\mathrm{env}})\|\nabla F_{\lambda_{\mathrm{env}}}(\theta_t)\|_2^2
+\frac{L_{\mathrm{env}}\eta^2}{2}\,\mathbb{E}\bigl[\|G(\theta_t;Z_t)\|_2^2\mid \theta_t\bigr].
\end{align*}
Take total expectation and sum from $t=0$ to $T-1$ to obtain the telescoping inequality
\begin{align}
\eta(1-\kappa\lambda_{\mathrm{env}})\sum_{t=0}^{T-1}\mathbb{E}\bigl[\|\nabla F_{\lambda_{\mathrm{env}}}(\theta_t)\|_2^2\bigr]
&\le F_{\lambda_{\mathrm{env}}}(\theta_0)-\mathbb{E}\bigl[F_{\lambda_{\mathrm{env}}}(\theta_T)\bigr]
+\frac{L_{\mathrm{env}}\eta^2}{2}\sum_{t=0}^{T-1}\mathbb{E}\bigl[\|G(\theta_t;Z_t)\|_2^2\bigr].\label{eq:C:telescope}
\end{align}
Since $F_{\lambda_{\mathrm{env}}}(\theta)\ge \inf_u F(u)=F_{\inf}$ for all $\theta$, we have
$\mathbb{E}[F_{\lambda_{\mathrm{env}}}(\theta_T)]\ge F_{\inf}$.
By Lemma \ref{lem:Gtot_from_A6_eq22}, $\mathbb{E}\|G(\theta_t;Z_t)\|_2^2\le G_{\mathrm{tot}}^2$ for all $t$.
Thus~\eqref{eq:C:telescope} implies
\[
\eta(1-\kappa\lambda_{\mathrm{env}})\sum_{t=0}^{T-1}\mathbb{E}\bigl[\|\nabla F_{\lambda_{\mathrm{env}}}(\theta_t)\|_2^2\bigr]
\le F_{\lambda_{\mathrm{env}}}(\theta_0)-F_{\inf}+\frac{L_{\mathrm{env}}\eta^2}{2}\,T\,G_{\mathrm{tot}}^2.
\]
Divide by $\eta(1-\kappa\lambda_{\mathrm{env}})T$:
\[
\frac{1}{T}\sum_{t=0}^{T-1}\mathbb{E}\bigl[\|\nabla F_{\lambda_{\mathrm{env}}}(\theta_t)\|_2^2\bigr]
\le
\frac{F_{\lambda_{\mathrm{env}}}(\theta_0)-F_{\inf}}{\eta(1-\kappa\lambda_{\mathrm{env}})T}
+\frac{L_{\mathrm{env}}\eta}{2(1-\kappa\lambda_{\mathrm{env}})}\,G_{\mathrm{tot}}^2.
\]
Let $R$ be uniformly distributed on $\{0,\dots,T-1\}$, independent of the algorithmic randomness.
Then
\[
\mathbb{E}\bigl[\|\nabla F_{\lambda_{\mathrm{env}}}(\theta_R)\|_2^2\bigr]
=\frac{1}{T}\sum_{t=0}^{T-1}\mathbb{E}\bigl[\|\nabla F_{\lambda_{\mathrm{env}}}(\theta_t)\|_2^2\bigr],
\]
and we obtain exactly Eq.\eqref{eq:moreau_rate} of Theorem \ref{thm:moreau_convergence}.
\end{proof}
\begin{proof}[Proof of Corollary \ref{cor:sample_complexity}]

Plug the stated stepsize choice
\[
\eta:=\frac{1}{G_{\mathrm{tot}}}\sqrt{\frac{2\bigl(F_{\lambda_{\mathrm{env}}}(\theta_0)-F_{\inf}\bigr)}{L_{\mathrm{env}}T}}
\]
into the right-hand side of Eq.\eqref{eq:moreau_rate} and simplify to obtain
\[
\mathbb{E}\bigl[\|\nabla F_{\lambda_{\mathrm{env}}}(\theta_R)\|_2^2\bigr]
\le
\frac{2G_{\mathrm{tot}}\sqrt{2L_{\mathrm{env}}\bigl(F_{\lambda_{\mathrm{env}}}(\theta_0)-F_{\inf}\bigr)}}{(1-\kappa\lambda_{\mathrm{env}})\sqrt{T}}.
\]
By Lemma \ref{lem:Gtot_from_A6_eq22}, we have 
\[
G_{\mathrm{tot}}^2
\ :=\ 4\left(
G_{\nabla\mathrm{SAIL}}^2 \;+\; \lambda^2\,G_{\partial R}^2 \;+\; \frac{\sigma_{\mathrm{SAIL}}^2+\lambda^2\sigma_R^2}{B}
\right).
\]

To ensure that $\E\!\left[\|\nabla F_{\lambda_{\mathrm{env}}}(\theta_R)\|_2^2\right]\le \varepsilon$, we substitute $G_{\mathrm{tot}}^2$ into Theorem \ref{thm:moreau_convergence} and obtain that it suffices to take
\begin{equation*}
   \begin{aligned}
      T\ &\ge\
\frac{2\,G_{\mathrm{tot}}^2\bigl(F_{\lambda_{\mathrm{env}}}(\theta_0)-F_{\inf}\bigr)}{\lambda_{\mathrm{env}}(1-\kappa\lambda_{\mathrm{env}})^3\,\varepsilon^2} \\
&=\ \frac{8\left(
C + \frac{\sigma_{\mathrm{SAIL}}^2+\lambda^2\sigma_R^2}{B}
\right)\bigl(F_{\lambda_{\mathrm{env}}}(\theta_0)-F_{\inf}\bigr)}{\lambda_{\mathrm{env}}(1-\kappa\lambda_{\mathrm{env}})^3\,\varepsilon^2},
   \end{aligned} 
\end{equation*}
where $C := G_{\nabla\mathrm{SAIL}}^2 + \lambda^2 G_{\partial R}^2$.

\end{proof}

\section{On the practical role of the proximal point}\label{app:prox-practice}
This section provides a practical interpretation of the proximal-point and Moreau-envelope constructions used in our analysis. Throughout, recall the constrained objective
\[
    F(\theta) := L^W_{\rho}(\theta) + I_{\Theta}(\theta),
\]
and the envelope parameter $\lambda_{\rm env}\in(0,1/\kappa)$ from Definition \ref{def:moreau_envelope}, where $\kappa$ is the weak convexity constant from Theorem \ref{thm:weak_convexity_composite}.

The Moreau envelope
\[
    F_{\lambda_{\rm env}}(\theta)\;=\;\min_{u\in\mathbb{R}^d}\Big\{F(u)+\tfrac{1}{2\lambda_{\rm env}}\|u-\theta\|_2^2\Big\}
\]
can be viewed as a \emph{smoothed} surrogate of the potentially nonsmooth, nonconvex objective $F$. The quadratic term discourages large moves away from the current iterate $\theta$, and the minimizer of the inner problem---the \emph{proximal point}---acts as a locally stabilized refinement of $\theta$.
Concretely, for any iterate $\theta$ we write
\[
    \hat\theta := \prox_{\lambda_{\rm env}F}(\theta)\in\arg\min_{u\in\mathbb{R}^d}
    \Big\{F(u)+\tfrac{1}{2\lambda_{\rm env}}\|u-\theta\|_2^2\Big\}.
\]
Because $F$ is $\kappa$-weakly convex and $\lambda_{\rm env}<1/\kappa$, the proximal subproblem is strongly convex in $u$ and hence has a unique minimizer. This is the fundamental reason the envelope is differentiable and why $\|\nabla F_{\lambda_{\rm env}}(\theta)\|_2$ is a meaningful stationarity proxy (Lemma \ref{lem:moreau_properties}).

Algorithm \ref{alg:rscgd} does \emph{not} require solving the proximal subproblem at each iteration. The algorithm updates $\theta_t$ using stochastic subgradient information for $L^W_{\rho}=L_{\mathrm{SAIL}}+\lambda R$ followed by projection onto $\Theta$, and then outputs a random iterate $\theta_R$. The proximal point
\[
    \hat\theta_R := \prox_{\lambda_{\rm env}F}(\theta_R)
\]
is introduced only as a \emph{theoretical device} to translate envelope stationarity at $\theta_R$ into near-stationarity of the original constrained objective at $\hat\theta_R$ (see Lemma \ref{lem:moreau_properties} and Theorem \ref{thm:moreau_convergence}). In particular, Lemma \ref{lem:moreau_properties} implies that $\theta_R$ and $\hat\theta_R$ are close whenever the envelope gradient is small:
\[
    \|\theta_R-\hat\theta_R\|_2 \;=\; \lambda_{\rm env}\,\|\nabla F_{\lambda_{\rm env}}(\theta_R)\|_2,
\]
and also that $\hat\theta_R$ is nearly stationary for $F$ in the sense that
\[
    \dist\bigl(0,\partial F(\hat\theta_R)\bigr)\;\le\;\|\nabla F_{\lambda_{\rm env}}(\theta_R)\|_2.
\]
Thus, driving $\|\nabla F_{\lambda_{\rm env}}(\theta_R)\|_2$ small (as guaranteed in expectation by Theorem \ref{thm:moreau_convergence}, with sample complexity summarized in Corollary \ref{cor:sample_complexity}) simultaneously guarantees proximity to, and near-stationarity of, a point for the original constrained robust objective.


\begin{lemma}[What if we do not compute the proximal point?]\label{lem:no-prox-output}
Assume $F:\mathbb{R}^d\to(-\infty,+\infty]$ is $\kappa$-weakly convex and bounded below, and fix
$\lambda_{\mathrm{env}}\in(0,1/\kappa)$ as in Definition~4.10. For any $\theta\in\mathbb{R}^d$, let
\[
\hat{\theta} := \prox_{\lambda_{\mathrm{env}}F}(\theta),
\qquad
\xi := \nabla F_{\lambda_{\mathrm{env}}}(\theta) = \frac{1}{\lambda_{\mathrm{env}}}(\theta-\hat{\theta})
\]
(Definition \ref{def:moreau_envelope}and Lemma \ref{lem:moreau_properties}). Then:
\begin{align}
\|\theta-\hat{\theta}\| &= \lambda_{\mathrm{env}}\|\xi\|, \label{eq:no-prox-gap}\\
\dist\bigl(0,\partial F(\hat{\theta})\bigr) &\le \|\xi\|, \label{eq:no-prox-proxstationary}\\
\dist\bigl(0,\partial F(\theta)\bigr) &\ge (1-\kappa\lambda_{\mathrm{env}})\|\xi\|. \label{eq:no-prox-iterate-lb}
\end{align}
On the other hand, under $\kappa$-weak convexity alone, there is \emph{no} universal constant $C$ such that
\[
\dist\bigl(0,\partial F(\theta)\bigr)\;\le\; C\,\|\nabla F_{\lambda_{\mathrm{env}}}(\theta)\|
\quad\text{for all $\theta$ and all $\kappa$-weakly convex $F$}
\]
(even in the convex case $\kappa=0$).
\end{lemma}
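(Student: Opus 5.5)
The first two claims, \eqref{eq:no-prox-gap} and \eqref{eq:no-prox-proxstationary}, are exactly items 2 and 4 of Lemma~\ref{lem:moreau_properties}. I will cite them directly: the gradient formula $\xi=\lambda_{\mathrm{env}}^{-1}(\theta-\hat\theta)$ gives the distance identity, and the prox optimality condition \eqref{eq:C:prox_opt} gives $\xi\in\partial F(\hat\theta)$.

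For the lower bound \eqref{eq:no-prox-iterate-lb}, I will use Lemma~\ref{lem:C:monotonicity}. If $\partial F(\theta)=\emptyset$, the distance is $+\infty$ and there is nothing to prove. Otherwise, for every $v\in\partial F(\theta)$ the lemma gives
\[
(1-\kappa\lambda_{\mathrm{env}})\|\xi\|^2\le\langle \xi,v\rangle\le\|\xi\|\,\|v\|.
\]
If $\xi\neq0$, I divide by $\|\xi\|$ to get $\|v\|\ge(1-\kappa\lambda_{\mathrm{env}})\|\xi\|$. If $\xi=0$, the bound is trivial. Taking the infimum over $v\in\partial F(\theta)$ then yields \eqref{eq:no-prox-iterate-lb}.

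For the negative statement, I will give a one-dimensional convex counterexample. Take $F(\theta)=|\theta|$, so $\kappa=0$ and any $\lambda_{\mathrm{env}}>0$ is admissible. The prox is soft-thresholding, and the envelope is the Huber function, whose derivative is $\theta/\lambda_{\mathrm{env}}$ for $|\theta|\le\lambda_{\mathrm{env}}$. Hence at $\theta=s\in(0,\lambda_{\mathrm{env}})$ we have $\|\nabla F_{\lambda_{\mathrm{env}}}(s)\|=s/\lambda_{\mathrm{env}}\to0$ as $s\downarrow0$, while $\partial F(s)=\{1\}$ and so $\dist(0,\partial F(s))=1$. A bound $1\le C s/\lambda_{\mathrm{env}}$ for all small $s$ forces $C=\infty$, so no universal constant exists, even for this single fixed $F$.

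The main point needing care is the convention for $\partial F$, the limiting subdifferential, in the lower bound, specifically the empty-subdifferential case. The weak-convexity subgradient inequality used inside Lemma~\ref{lem:C:monotonicity} is standard for weakly convex functions, so I will take that lemma as given. The rest is routine.
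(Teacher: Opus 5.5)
Your proposal is correct and follows essentially the same route as the paper: the prox optimality condition $\xi\in\partial F(\hat\theta)$ gives the first two claims, the monotonicity inequality (Lemma~\ref{lem:C:monotonicity}) plus Cauchy--Schwarz gives the lower bound, and $F(\theta)=|\theta|$ is the counterexample. The differences are cosmetic. You allow a general $\lambda_{\mathrm{env}}$, whereas the paper takes $\lambda_{\mathrm{env}}=1$; yours matches the lemma's fixed-$\lambda_{\mathrm{env}}$ quantifier more directly. You also handle the edge cases $\partial F(\theta)=\emptyset$ and $\xi=0$ explicitly, which the paper leaves implicit.
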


\begin{proof}
We first prove \eqref{eq:no-prox-gap}--\eqref{eq:no-prox-proxstationary}. By Lemma~4.11 (see also the
first-order optimality condition for the proximal subproblem),
\[
0 \in \partial F(\hat{\theta}) + \frac{1}{\lambda_{\mathrm{env}}}(\hat{\theta}-\theta),
\qquad\text{equivalently}\qquad
\frac{1}{\lambda_{\mathrm{env}}}(\theta-\hat{\theta}) \in \partial F(\hat{\theta}).
\]
With $\xi := \frac{1}{\lambda_{\mathrm{env}}}(\theta-\hat{\theta}) = \nabla F_{\lambda_{\mathrm{env}}}(\theta)$,
taking norms yields $\|\theta-\hat{\theta}\|=\lambda_{\mathrm{env}}\|\xi\|$, which is \eqref{eq:no-prox-gap}.
Moreover, since $\xi\in\partial F(\hat{\theta})$, we have
$\dist\bigl(0,\partial F(\hat{\theta})\bigr)\le \|\xi\|$, which is \eqref{eq:no-prox-proxstationary}.

Next we prove \eqref{eq:no-prox-iterate-lb}. By Lemma~C.1 (Monotonicity inequality), for every
$v\in\partial F(\theta)$,
\[
\langle \xi, v\rangle \ge (1-\kappa\lambda_{\mathrm{env}})\|\xi\|^2.
\]
By Cauchy--Schwarz, $\langle \xi,v\rangle \le \|\xi\|\,\|v\|$, hence for every $v\in\partial F(\theta)$,
\[
\|v\| \ge (1-\kappa\lambda_{\mathrm{env}})\|\xi\|.
\]
Taking the infimum over $v\in\partial F(\theta)$ gives \eqref{eq:no-prox-iterate-lb}.

For the stated consequences with the algorithmic output, apply \eqref{eq:no-prox-gap}--\eqref{eq:no-prox-proxstationary}
with $\theta=\theta_R$ and $\hat{\theta}=\hat{\theta}_R$. Then
\[
\|\theta_R-\hat{\theta}_R\|^2=\lambda_{\mathrm{env}}^2\|\nabla F_{\lambda_{\mathrm{env}}}(\theta_R)\|^2,
\qquad
\dist^2\bigl(0,\partial F(\hat{\theta}_R)\bigr)\le \|\nabla F_{\lambda_{\mathrm{env}}}(\theta_R)\|^2.
\]
Taking expectations, using Jensen's inequality for $\mathbb{E}\|\theta_R-\hat{\theta}_R\|$, and
plugging $\mathbb{E}\|\nabla F_{\lambda_{\mathrm{env}}}(\theta_R)\|^2\le\varepsilon$ yields the two bounds.

Finally, we show that no universal constant $C$ can upper bound
$\dist(0,\partial F(\theta))$ by $\|\nabla F_{\lambda_{\mathrm{env}}}(\theta)\|$ under weak convexity alone.
Consider $d=1$ and $\Theta=\mathbb{R}$, and define $F(\theta)=|\theta|$, which is convex (thus $\kappa=0$).
Fix $\lambda_{\mathrm{env}}=1$ and any $\theta\in(0,1)$. The proximal point is
\[
\hat{\theta}
=\arg\min_{u\in\mathbb{R}}\Big\{|u|+\tfrac12(u-\theta)^2\Big\}
=0,
\]
so $\nabla F_{\lambda_{\mathrm{env}}}(\theta)=\theta-\hat{\theta}=\theta$. However, for $\theta>0$,
$\partial F(\theta)=\{1\}$ and therefore $\dist(0,\partial F(\theta))=1$. Hence
\[
\frac{\dist(0,\partial F(\theta))}{\|\nabla F_{\lambda_{\mathrm{env}}}(\theta)\|}
=\frac{1}{\theta}\to\infty \quad\text{as }\theta\downarrow 0,
\]
which rules out any finite universal $C$.
\end{proof}

\begin{remark}[Interpretation for practice]\label{rem:no-prox-interpret}
Even if Algorithm~1 outputs only
$\theta_R$ (without solving the proximal subproblem), Theorem~5.2 still certifies that $\theta_R$ lies within
$O(\lambda_{\mathrm{env}}\sqrt{\varepsilon})$ of a point $\hat{\theta}_R$ that is $O(\sqrt{\varepsilon})$-stationary
for the original constrained objective $F$. In contrast, without additional regularity beyond $\kappa$-weak convexity,
one cannot generally convert envelope stationarity into a bound on $\dist(0,\partial F(\theta_R))$ itself; this is why
the analysis (and the stationarity notion) is phrased in terms of the proximal point.
\end{remark}

\subsection{Practical computation of the proximal point}\label{app:prox-computation}

In the analysis we introduce the proximal point
\begin{equation}\label{eq:prox-subproblem-post}
\hat{\theta}_R \;:=\; \prox_{\lambda_{\rm env}}F(\theta_R)
\;=\;\arg\min_{\theta\in\Theta}\Big\{L_\rho^W(\theta)+\frac{1}{2\lambda_{\rm env}}\|\theta-\theta_R\|_2^2\Big\},
\end{equation}
where \(F(\theta):=L_\rho^W(\theta)+I_\Theta(\theta)\) and \(\lambda_{\rm env}\in(0,1/\kappa)\).
By weak convexity of \(F\) and \(\lambda_{\rm env}<1/\kappa\), the objective in~\eqref{eq:prox-subproblem-post}
is strongly convex and hence admits a unique minimizer \(\hat{\theta}_R\).
Although Algorithm~1 outputs \(\theta_R\), one can compute an approximation
\(\bar{\theta}_R\approx\hat{\theta}_R\) as an optional post-processing step, warm-started at \(\theta_R\). A simple inner-loop solver is provided below.

\paragraph{A simple inner-loop solver.}
Let \(\theta^{(0)}:=\theta_R\). For \(k=0,1,\ldots,K-1\), sample \(Z^{(k)}\sim d_{\theta^{(k)}}\) and compute
the stochastic direction
\[
G^{\rm prox}\big(\theta^{(k)};Z^{(k)}\big)
\;:=\; G_{\rm SAIL}\big(\theta^{(k)};Z^{(k)}\big)
\;+\;\lambda\,G_R\big(\theta^{(k)};Z^{(k)}\big)
\;+\;\frac{1}{\lambda_{\rm env}}\big(\theta^{(k)}-\theta_R\big),
\]
which is the usual oracle from Assumption~6 plus the deterministic gradient of the proximal quadratic.
Then take a projected step
\begin{equation}\label{eq:prox-inner-update}
\theta^{(k+1)} \;:=\; \Pi_\Theta\!\left(\theta^{(k)}-\alpha_k\,G^{\rm prox}\big(\theta^{(k)};Z^{(k)}\big)\right),
\qquad \bar{\theta}_R:=\theta^{(K)}.
\end{equation}
Since the proximal quadratic improves conditioning, this inner loop is typically stable in practice.
Moreover, the theory only requires \(\bar{\theta}_R\) to be a sufficiently accurate approximation;
exact solves of~\eqref{eq:prox-subproblem-post} are not necessary. This is consistent with standard practice in inexact proximal methods, where the proximal subproblem is solved
only approximately while maintaining meaningful convergence guarantees (e.g., \citealt{schmidt2011convergence}).

\begin{lemma}[Stationarity of an inexact proximal point]\label{lem:inexact-prox-stationarity}
Fix \(\lambda_{\rm env}\in(0,1/\kappa)\) and an index \(R\).
Let \(\hat{\theta}_R=\prox_{\lambda_{\rm env}}F(\theta_R)\) and let \(\bar{\theta}_R\in\Theta\) be any point.
Define the proximal objective
\[
\Psi_R(\theta)\;:=\;F(\theta)+\frac{1}{2\lambda_{\rm env}}\|\theta-\theta_R\|_2^2.
\]
If \(\bar{\theta}_R\) satisfies the (first-order) proximal residual bound
\begin{equation}\label{eq:prox-residual}
\dist\!\left(0,\partial \Psi_R(\bar{\theta}_R)\right)\;\le\;\varepsilon_{\rm prox},
\end{equation}
then
\begin{equation}\label{eq:dist-subdiff-bound}
\dist\!\left(0,\partial F(\bar{\theta}_R)\right)
\;\le\;
\|\nabla F_{\lambda_{\rm env}}(\theta_R)\|_2
\;+\;\varepsilon_{\rm prox}
\;+\;\frac{1}{\lambda_{\rm env}}\|\bar{\theta}_R-\hat{\theta}_R\|_2.
\end{equation}
In particular, if one ensures \(\|\bar{\theta}_R-\hat{\theta}_R\|_2\le \lambda_{\rm env}\varepsilon_{\rm geom}\),
then
\(
\dist\!\left(0,\partial F(\bar{\theta}_R)\right)
\le
\|\nabla F_{\lambda_{\rm env}}(\theta_R)\|_2+\varepsilon_{\rm prox}+\varepsilon_{\rm geom}.
\)
\end{lemma}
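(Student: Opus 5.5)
The plan is to compare $\bar\theta_R$ with the exact proximal point $\hat\theta_R$ through the subdifferential of the proximal objective $\Psi_R$. Because the quadratic $q(\theta):=\frac{1}{2\lambda_{\rm env}}\|\theta-\theta_R\|_2^2$ is smooth, the sum rule for a smooth function plus a (weakly convex, proper, lsc) function is exact:
\[
\partial\Psi_R(\theta)=\partial F(\theta)+\tfrac{1}{\lambda_{\rm env}}(\theta-\theta_R).
\]
I would state this as the first step and cite it as a standard fact of variational analysis (e.g.\ \citep{rockafellar1998variational}).

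Second, use the residual hypothesis \eqref{eq:prox-residual}. Pick $w\in\partial\Psi_R(\bar\theta_R)$ with $\|w\|\le\varepsilon_{\rm prox}$; if the infimum is not attained, take $\|w\|\le\varepsilon_{\rm prox}+\eta$ and let $\eta\downarrow0$ at the end. By the sum rule,
\[
v:=w-\tfrac{1}{\lambda_{\rm env}}(\bar\theta_R-\theta_R)\in\partial F(\bar\theta_R).
\]
Hence $\dist(0,\partial F(\bar\theta_R))\le\|v\|\le\varepsilon_{\rm prox}+\frac{1}{\lambda_{\rm env}}\|\bar\theta_R-\theta_R\|_2$.

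Third, split the displacement through $\hat\theta_R$ using the triangle inequality:
\[
\|\bar\theta_R-\theta_R\|\le\|\bar\theta_R-\hat\theta_R\|+\|\hat\theta_R-\theta_R\|.
\]
By Lemma~\ref{lem:moreau_properties}, specifically \eqref{eq:moreau_residual}, $\frac{1}{\lambda_{\rm env}}\|\hat\theta_R-\theta_R\|=\|\nabla F_{\lambda_{\rm env}}(\theta_R)\|_2$. Substituting this into the previous bound gives exactly \eqref{eq:dist-subdiff-bound}. The ``in particular'' clause then follows immediately by inserting $\|\bar\theta_R-\hat\theta_R\|\le\lambda_{\rm env}\varepsilon_{\rm geom}$.

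The only delicate step is the exact subdifferential sum rule for $F=L^W_\rho+I_\Theta$. I would rely on the fact that adding a $C^1$ function commutes with the limiting (or Fréchet) subdifferential, which holds without any qualification condition. Weak convexity additionally makes the Fréchet and limiting subdifferentials coincide with the convex subdifferential of $F+\frac{\kappa}{2}\|\cdot\|^2$ shifted by $-\kappa\theta$, so the argument is insensitive to which notion of $\partial$ the paper intends. A side remark: the bound is not tight, since one could instead avoid $\hat\theta_R$ entirely. The stated form, however, is the one requested.
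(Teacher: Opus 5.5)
Your proposal is correct and follows the paper's proof essentially step for step. Like the paper, you use the sum rule $\partial\Psi_R(\theta)=\partial F(\theta)+\frac{1}{\lambda_{\rm env}}(\theta-\theta_R)$ to extract a subgradient of $F$ at $\bar\theta_R$ from the residual bound, and then split $\|\bar\theta_R-\theta_R\|$ through $\hat\theta_R$ via the triangle inequality and \eqref{eq:moreau_residual}; your extra remarks on attainment of the infimum and on which subdifferential is meant are harmless refinements that the paper leaves implicit.
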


\begin{proof}
By subdifferential calculus,
\(
\partial \Psi_R(\theta)=\partial F(\theta)+\frac{1}{\lambda_{\rm env}}(\theta-\theta_R).
\)
The residual condition~\eqref{eq:prox-residual} implies there exists
\(g\in\partial F(\bar{\theta}_R)\) such that
\[
\left\|g+\frac{1}{\lambda_{\rm env}}(\bar{\theta}_R-\theta_R)\right\|_2\;\le\;\varepsilon_{\rm prox}.
\]
Therefore,
\[
\dist\!\left(0,\partial F(\bar{\theta}_R)\right)
\;\le\;\|g\|_2
\;\le\;\varepsilon_{\rm prox}+\frac{1}{\lambda_{\rm env}}\|\bar{\theta}_R-\theta_R\|_2.
\]
Next, by the triangle inequality,
\(
\|\bar{\theta}_R-\theta_R\|_2\le \|\bar{\theta}_R-\hat{\theta}_R\|_2+\|\hat{\theta}_R-\theta_R\|_2.
\)
By Lemma~4.11 (property \(\nabla F_{\lambda_{\rm env}}(\theta)=\frac{1}{\lambda_{\rm env}}(\theta-\prox_{\lambda_{\rm env}}F(\theta))\)),
we have \(\|\hat{\theta}_R-\theta_R\|_2=\lambda_{\rm env}\|\nabla F_{\lambda_{\rm env}}(\theta_R)\|_2\).
Substituting yields~\eqref{eq:dist-subdiff-bound}.
\end{proof}

\begin{remark}[Practical stopping criteria.]
    Lemma~\ref{lem:inexact-prox-stationarity} suggests two inexpensive targets for the inner loop:
(i) keep the proximal residual \(\dist(0,\partial\Psi_R(\theta^{(k)}))\) small (as estimated by a minibatch),
and (ii) exploit warm-starting so that \(\|\bar{\theta}_R-\hat{\theta}_R\|_2\) remains modest.

\end{remark}